\documentclass{article}

\usepackage[accepted]{icml2026}

\usepackage{microtype}
\usepackage{graphicx}
\usepackage{subcaption}
\usepackage{booktabs} %
\usepackage{algorithm}
\usepackage{algorithmic}
\usepackage{fontawesome}
\newcommand{\epflmark}{\textsuperscript{\,\textcolor{red}{\bfseries\textsf{EPFL}}}}
\usepackage[dvipsnames]{xcolor}
\usepackage{hyperref}
\hypersetup{colorlinks,breaklinks,
    citecolor=[RGB]{153,204,153},
    linkcolor=[RGB]{102,153,204},
    urlcolor=[RGB]{115,140,191}
}

\usepackage{amsmath}
\usepackage{amssymb}
\usepackage{mathtools}
\usepackage{amsthm}
\usepackage{enumitem}
\usepackage{xargs}
\usepackage{booktabs}
\usepackage{siunitx}
\usepackage{my_commands}
\usepackage{my_preamble}

\usepackage[autostyle, english = american]{csquotes}
\usepackage[english]{babel}
\MakeOuterQuote{"}

\newcommand{\methodlongname}{MonoSoup\xspace}
\newcommand{\methodname}{MonoSoup\xspace}

\icmltitlerunning{Model soups need only one ingredient}

\begin{document}
\twocolumn[
  \icmltitle{Model soups need only one ingredient}

  \icmlsetsymbol{EPFL}{\epflmark}

  \begin{icmlauthorlist}
    \icmlauthor{Alireza Abdollahpoorrostam}{EPFL}
    \icmlauthor{Nikolaos Dimitriadis}{EPFL}
    \icmlauthor{Adam Hazimeh}{EPFL}
    \icmlauthor{Pascal Frossard}{EPFL}
  \end{icmlauthorlist}

  \icmlcorrespondingauthor{Alireza Abdollahpoorrostam}{alireza.abdollahpoorrostam@epfl.ch}

  \icmlkeywords{Machine Learning, ICML}

  \vskip 0.05in

  \centerline{
    \href{https://github.com/alirezaabdollahpour/MonoSoup}{\Large \faGithub}
    \hspace{30pt} %
    \href{https://alirezaabdollahpour.github.io/MonoSoup/}{\Large \faGlobe}
  }

  \vskip 0.3in
]
\printAffiliationsAndNotice{\epflmark LTS4, École Polytechnique Fédérale de Lausanne (EPFL), Switzerland}

\begin{abstract}
Fine-tuning large pre-trained models on a target distribution often improves in-distribution (ID) accuracy, but at the cost of out-of-distribution (OOD) robustness as representations specialize to  the fine-tuning data. Weight-space ensembling methods, such as Model Soups, mitigate this effect by averaging multiple checkpoints, but they are computationally prohibitive, requiring the training and storage of dozens of fine-tuned models. In this paper, we introduce MonoSoup, a simple, data-free, hyperparameter-free, post-hoc method that achieves a strong ID–OOD balance using \textit{only a single} checkpoint. Our method applies Singular Value Decomposition (SVD) to each layer’s update and decomposes it into high-energy directions that capture task-specific adaptation and low-energy directions that introduce noise but may still encode residual signals useful for robustness. MonoSoup then uses entropy-based effective rank to automatically reweight these components with layer-wise coefficients that account for the spectral and geometric structure of the model. Experiments on CLIP models fine-tuned on ImageNet and evaluated under natural distribution shifts, as well as on Qwen language models tested on mathematical reasoning and multiple-choice benchmarks, show that this plug-and-play approach is a practical and effective alternative to multi-checkpoint methods, retaining much of their benefits without their computational overhead. 
\looseness=-1
\end{abstract} 

\vspace{-0.8cm}
\section{Introduction}
\label{sec:introduction}
The \textit{pre-train-then-finetune} paradigm \citep{kumar2022finetuning} has become the de-facto approach for  leveraging the capabilities of foundation models \citep{bommasani2022opportunities} and has accelerated progress across a wide range of applications \citep{radford2021learning,rombach2022highresolution}. 
However, specialization often comes at a cost: the fine-tuning process that adapts a model to a target distribution frequently degrades its general-purpose knowledge, leading to a significant drop in out-of-distribution (OOD) performance, a phenomenon known as 
\textit{catastrophic forgetting} \citep{mccloskey1989catastrophic}.
This leads to a trade-off between in-distribution (ID) performance and OOD robustness, which remains a central challenge for the reliable deployment of these powerful models \citep{kumar2022finetuning,goyal2023finetune}.
\looseness=-1

To address this trade-off, post-hoc methods that directly manipulate model weights have gained traction. A prominent example is Model Soups~\citep{wortsman2022model}, which improves both ID and OOD performance by averaging the weights of multiple fine-tuned checkpoints. While effective, this approach is often impractical due to the computational and storage overhead of training and retaining dozens of checkpoints. To reduce this burden, ModelStock~\citep{jang2024model} was proposed as a more efficient alternative, requiring only two models and weighting their updates according to their geometric alignment. However, this assumption of having access to two suitable checkpoints is often unrealistic in practice, as model repositories typically store only a single, best-performing version. 
In parallel, Wise-FT~\citep{wortsman2022robust} explored single-model robustness by interpolating between the pre-trained and fine-tuned weights, leveraging the low-loss path between them. While effective in tracing the trade-off between specialization and robustness, Wise-FT applies a uniform interpolation across all layers and directions, leaving fine grained anisotropic effects unaddressed. 
These observations naturally raise the following question:

\begin{center}
    \vspace{-2pt}
    \fbox{
        \parbox{0.9\linewidth}{
            \centering
            \textit{``Can we retain the benefits of model soups when only a single fine-tuned model is available?''}
        }
    }
    \vspace{-2pt}
\end{center}

In this paper, we answer this question affirmatively. 
We begin by characterizing when multi-model merging succeeds and when it fails.
Our analysis reveals that improvements in both ID and OOD accuracy are consistently obtained when the fine-tuning updates of two models are well-aligned (high cosine similarity), highlighting a strong link between geometric similarity and generalization. To validate this insight, we introduce a Similarity-Filtered Greedy Soup that selects only those models that would improve the overall geometric alignment of the soup. This simple variant both confirms our hypothesis and provides a data-free, computationally efficient alternative to standard soups. These findings align with a unifying principle from recent works ~\citep{wortsman2022model,jang2024model,rame2023model,gargiulo2025tasksingularvectorsreducing,stoica2024modelmergingsvdtie} that successful weight-space ensembling methods reinforce dominant directions that encode task-relevant signals, while suppressing noisy or misaligned directions that degrade both ID and OOD performance.
\looseness=-1

Building on these insights, we shift from analyzing pairs of models to studying the internal properties of a single fine-tuned checkpoint. Since such models often over-specialize at the cost of OOD robustness, our goal is to test whether the structural signals that enable successful merging across multiple models can be exploited within a single model's weight space. To this end, we propose \methodlongname, a data-free and hyperparameter-free approach that applies Singular Value Decomposition (SVD) to each layer’s update and decomposes it into two complementary components: a principal subspace capturing high-energy directions associated with task-specific knowledge, and an orthogonal complement capturing low-energy directions that preserve information critical for OOD generalization. \methodname reweights these components using principled, layer-wise coefficients that automatically adapt to the model’s spectral and geometric properties. Crucially, \methodname leverages the entropy-based effective rank \cite{roy2007effective} to automatically partition each layer’s weight update, enabling robust adaptation to heterogeneous layer dynamics and architectures, including both Transformers and CNNs, without manual tuning. The resulting single edited checkpoint achieves a better balance between specialization and generalization.
\looseness=-1

Extensive experiments show that \methodname matches or exceeds the performance of multi-model methods while using just one fine-tuned checkpoint. On CLIP~\citep{radford2021learning} models fine-tuned on ImageNet~\citep{deng2009imagenet}, it improves the average OOD accuracy of the strongest baseline by $\sim$1\% and recovers up to 8\% on weaker, representation-collapsed checkpoints, while maintaining strong ID accuracy. Similar gains are also observed on language-based mathematical reasoning and QA tasks using Qwen3~\citep{Qwen3}. Moreover, \methodname complements single-model techniques such as Wise-FT~\citep{wortsman2022robust}, providing a stronger checkpoint that further improves the ID–OOD trade-off when the two are combined.
\looseness=-1

Our contributions are the following:
\begin{enumerate}[topsep=0pt, itemsep=0pt, parsep=0pt, leftmargin=*]
\item We establish a geometric perspective on when model merging succeeds or fails, showing that performance is closely tied to the alignment of fine-tuning updates. This analysis clarifies principles underlying multi-model methods and motivates their extension to the single-checkpoint setting. \looseness=-1
\item Based on this analysis, we test a new baseline method, Similarity-Filtered Greedy Soup, which is a data-free variant of the original method that uses geometric alignment as a selection criterion. This baseline demonstrates that alignment is a reliable proxy for merging effectiveness. \looseness=-1
\item We then introduce \methodname, a data-free, hyperparameter-free, post-hoc editing approach that improves the ID-OOD trade-off using only a single fine-tuned model, which typically suffers from degraded OOD performance on its own due to representation collapse. Our method decomposes each layer's update into high- and low-energy components and adaptively reweights them, eliminating the need for multiple checkpoints. \looseness=-1
\item We empirically validate our approach on vision~(CLIP, ConvNeXt) and language (Qwen) benchmarks, demonstrating that \methodname consistently improves OOD generalization while maintaining or enhancing in-distribution accuracy. \looseness=-1
\end{enumerate}

\section{Preliminaries}
\label{sec:preliminaries-related-methods}

\myparagraph{Model Soups.} The common practice in machine learning is to select the single best model from a hyperparameter search for final deployment, based on a validation metric, and discard the remaining checkpoints. However, models originating from the same pre-trained initialization often occupy a connected, low-loss basin in the optimization landscape \citep{garipov2018loss, frankle2020linear, izmailov2018averaging}, making them amenable to ensembling. Formally, consider $\nummodels$ weights $\{\bmth_t\}_{t\in[\nummodels]}$, obtained by fine-tuning the pre-trained weights $\bmth_0$ on a target dataset $\calD_{\textrm{train}}$ with $\nummodels$ different hyperparameter configurations. Model Soups \citep{wortsman2022model} leverages this insight by averaging the weights of multiple fine-tuned models, resulting in the final parameters $\bmth = \frac{1}{T}\sum_{t=1}^T \bmth_t$. While effective, soups are computationally expensive: their benefits are most pronounced when averaging many diverse checkpoints \citep{rame2023model,rame2022diwa}, which is impractical for large-scale models.
\looseness=-1
\begin{figure*}[!t] 
    \centering    
    \begin{subfigure}[t]{0.32\textwidth}
        \centering
        \includegraphics[width=\linewidth]{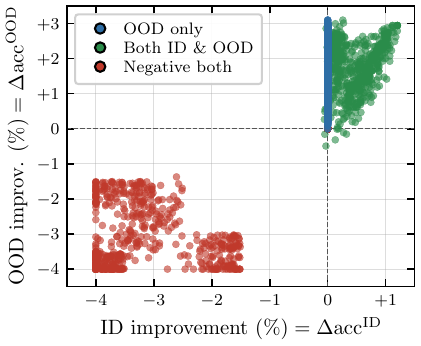}
        \caption{Performance scatter.}
        \label{fig:model_stock_scatter_sub}
    \end{subfigure}
    \hfill
    \begin{subfigure}[t]{0.32\textwidth}
        \centering
        \includegraphics[width=\linewidth]{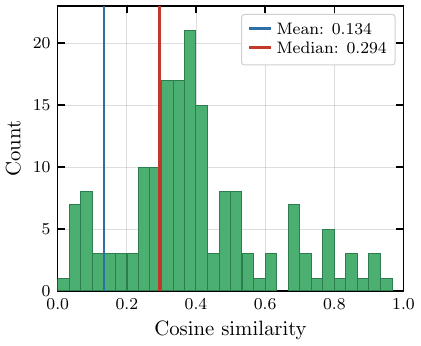}
        \caption{Low-performing pair similarity.}
        \label{fig:task_vector_underperforming}
    \end{subfigure}
    \hfill
    \begin{subfigure}[t]{0.32\textwidth}
        \centering
        \includegraphics[width=\linewidth]{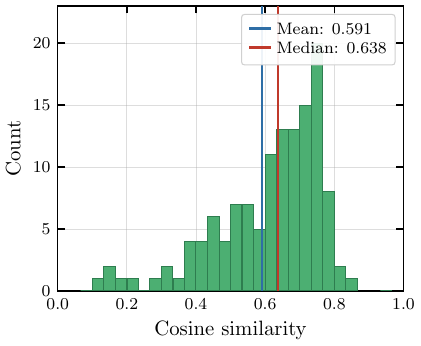}
        
        \caption{High-performing pair similarity.}
        \label{fig:task_vector_superior}
    \end{subfigure}
    \caption{
    Performance and alignment analysis of Model Stock on 2,409 pairwise combinations of CLIP ViT-B/32 models fine-tuned on ImageNet.
    (\subref{fig:model_stock_scatter_sub}) Scatter plot of ID vs. OOD performance relative to the better constituent model.
    (\subref{fig:task_vector_underperforming}) and (\subref{fig:task_vector_superior}): Layer-wise cosine similarity for low-performing and high-performing, respectively.  Stronger alignment coincides with consistent gains, highlighting that alignment can serve as a key predictor of merging success.
    \looseness=-1
    }
    \vspace{-0.55cm}
    \label{fig:combined_analysis_side_by_side}
\end{figure*}

\myparagraph{Model Stock.}
In an effort to reduce the significant computational and storage overhead of Model Soups, Model Stock \citep{jang2024model} requires only two models and operates layer-wise, based on the idea that cosine similarity can quantify the signal-to-noise ratio of the fine-tuning updates. Let $\WW_0^{(\ell)}$ denote the pre-trained weights at layer $\ell$, and $\WW_1^{(\ell)}, \WW_2^{(\ell)}$ the corresponding updates from the two checkpoints. Model Stock first computes the cosine similarity $\cos\alpha^{(\ell)}$ between these task vectors to measure their agreement. The merged weights are:

\begingroup
\setlength{\abovedisplayskip}{4pt} \setlength{\belowdisplayskip}{4pt}
\begin{equation}
    \label{eq:model_stock_merge}
    \WW_{\mathrm{stock}}^{(\ell)} = \WW_0^{(\ell)} + \lambda^{(\ell)} \cdot \bigg( \frac{\WW_1^{(\ell)} + \WW_2^{(\ell)}}{2} \bigg),
\end{equation}
\endgroup
where the scaling factor is defined as $\lambda^{(\ell)} = \frac{2\cos\alpha^{(\ell)}}{1+\cos\alpha^{(\ell)}}$.
This rule preserves directions where the updates are well-aligned ($\cos\alpha^{(\ell)} \to 1$), while reverting toward the pre-trained weights when they disagree. Compared to soups, it is more efficient since it reduces the number of required models from a large ensemble to just two, but still assumes access to at least two fine-tuned checkpoints. 
\looseness=-1

\myparagraph{Wise-FT and LiNeS.}
Beyond multi-checkpoint methods, there also exist approaches that operate with a single fine-tuned model. Wise-FT~\citep{wortsman2022robust} improves generalization by linearly interpolating between the pre-trained and fine-tuned weights. Given a coefficient $\lambda \in [0,1]$, the merged parameters are
$
\bmth_{\text{wise}} = (1-\lambda)\bmth_0 + \lambda \bmth_t,
$
which traces a continuum between robustness (closer to $\bmth_0$) and specialization (closer to $\bmth_t$). Therefore, Wise-FT delivers a family of models along this path rather than a single edited checkpoint, and it applies the same interpolation uniformly across all layers and directions, limiting its ability to capture anisotropic updates. LiNeS~\citep{wang2024lines} also operates on a single checkpoint, introducing post-training layer scaling to prevent catastrophic forgetting \citep{mccloskey1989catastrophic} and enhance model merging. However, it requires labeled data to tune its hyperparameters and employs a single coefficient for all layers within a transformer block, potentially overlooking the distinct dynamics of linear and attention layers.
\looseness=-1

\section{The role of alignment in model merging}
In this section, we investigate the conditions under which multi-model merging succeeds, finding that success often depends on the alignment of the fine-tuning updates. To investigate this, we use Model Stock~\citep{jang2024model} as a probe: since its layer-wise weighting rule explicitly depends on cosine similarity as in \autoref{eq:model_stock_merge}, it provides a natural lens to study how alignment relates to merging performance. \looseness=-1

We evaluate all pairwise combinations among 70 CLIP ViT-B/32 models, released by \citet{wortsman2022model} and fine-tuned on ImageNet; each model corresponds to a different hyper-parameter configuration. We compare each merged model against the better of its two constituents across five natural distribution shifts: ImageNet-V2~\citep{recht2019imagenet}, ImageNet-R~\citep{hendrycks2021many}, ImageNet-Sketch~\citep{wang2019learning}, ImageNet-A~\citep{hendrycks2021natural}, and ObjectNet~\citep{barbu2019objectnet}.
Specifically, we define the performance differences of model stock w.r.t. the involved models:
\begingroup
\begin{align*}  
    \Delta \textrm{acc}^{\textrm{ID}} &= \textrm{acc}^{\textrm{ID}}_\textrm{{MS}} - \max \left\{\textrm{acc}^{\textrm{ID}}_1, \textrm{acc}^{\textrm{ID}}_2\right\}\\
    \Delta \textrm{acc}^{\textrm{OOD}} &= \textrm{acc}^{\textrm{OOD}}_\textrm{{MS}} - \max \left\{\textrm{acc}^{\textrm{OOD}}_1, \textrm{acc}^{\textrm{OOD}}_2\right\}
\end{align*}
\endgroup
As illustrated in \autoref{fig:model_stock_scatter_sub}, which spans all $\binom{70}{2}=2415$ pairwise combinations, merging outcomes are highly sensitive to the choice of pair: a substantial fraction of combinations degrade both metrics, while simultaneous ID and OOD improvement is limited to a subset of well-aligned pairs\footnote{Six pairs with extreme negative outliers are excluded from the figure to preserve axis scale; their results are reported in \autoref{tab:model_pair_stock_performance_Six_pairs}.}. We randomly select a pair from each mode, showing a histogram of per-layer cosine similarities for a low- and high-performing pair in \autoref{fig:task_vector_underperforming} and \autoref{fig:task_vector_superior}, respectively. We observe performance improvements when task vectors are well aligned, but merging benefits diminish when they are weakly aligned. This sensitivity highlights a key principle: merging is effective when the fine-tuning updates are well-aligned, but fails when conflicting updates interfere. 
\looseness=-1

\begin{figure}[t!]
    \centering
    \setlength{\abovecaptionskip}{1pt} 
    \includegraphics[width=\linewidth, clip]{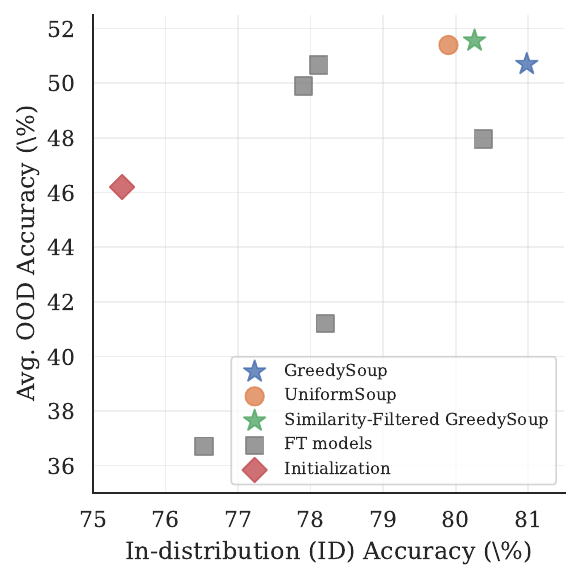}    
    \caption{
        \textbf{Performance of Similarity-Filtered Greedy Soup (SFGS).} Evaluated on CLIP ViT-B/32 checkpoints, SFGS achieves competitive ID and OOD performance relative to validation-based greedy soup. This supports the finding that geometric alignment is a key indicator of merging effectiveness.
        \looseness=-1
    }
    \label{fig:GreedySoup}
    \vspace{-0.677cm}
\end{figure}
To validate this observation, we introduce \textit{Similarity-Filtered Greedy Soup} (SFGS), a data-free variant of Greedy Soup, which replaces validation-based selection with a geometric filter. Starting with the best ID-performing model, we include a candidate checkpoint $\ttheta=\ttheta_0+\ttau$, decomposed as the sum of the pre-trained weights $\ttheta_0$ and the task vector $\ttau$,  if its average layer-wise cosine similarity to the current soup exceeds a threshold $\delta$:
$\frac{1}{|S|} \sum_{j \in S} \cos(\ttau, \ttau_j) \geq \delta$. 
As shown in \autoref{fig:GreedySoup}, this lightweight procedure, termed \textit{Similarity-Filtered Greedy Soup}, achieves performance comparable to validation-based Greedy Soup~\citep{wortsman2022model}, showing that geometric alignment serves as a reliable proxy for effective merging. Nevertheless, like all soup-based methods, it still requires multiple fine-tuned checkpoints.
\looseness=-1

Taken together, these analyses suggest that successful merging of models originating from the same pre-trained initialization hinges on reinforcing well-aligned directions while suppressing noisy or conflicting ones. In the next section, we explore whether these  principles apply within a single model: its fine-tuning update may contain both dominant, task-relevant directions as well as weaker components that can harm generalization. 
\looseness=-1

\section{\methodname}
\label{sec:method}
Our analysis of multi-model merging suggests that performance gains arise when fine-tuning updates reinforce shared directions while suppressing noisy or conflicting ones. This motivates searching for analogous signals \textit{within a single checkpoint}. Specifically, we hypothesize that the update of a fine-tuned model contains both dominant directions that capture task-specific adaptation and weaker components that, while less prominent, are important for maintaining generalization.  
\looseness=-1

To make this structure explicit, we analyze the weight difference matrix at each layer 
$\WW^{(\ell)} = \WW^{(\ell)}_\text{FT} - \WW^{(\ell)}_0 \in \mathbb{R}^{m \times n}$, 
where $\WW^{(\ell)}_0$ and $\WW^{(\ell)}_\text{FT}$ are the pre-trained and fine-tuned weights for layer $\ell$, respectively. Applying singular value decomposition (SVD),  
$\WW^{(\ell)} = \bm{U}^{(\ell)} \bm{\Sigma}^{(\ell)} \bm{V}^{(\ell)\top},$
where the spectrum of singular values $\sigma_1 \geq \sigma_2 \geq \dots$  quantifies how the adaptation is distributed across directions. We partition this spectrum into two components:  
\begingroup
\setlength{\abovedisplayskip}{4pt} \setlength{\belowdisplayskip}{4pt}
\begin{equation*}
\label{eq:LOSM_high_low}
\WW^{(\ell)}_{\text{High}} = \sum_{i \le k} \sigma_i^{(\ell)} u_i^{(\ell)} v_{i}^{(\ell)\top}, \quad
\WW^{(\ell)}_{\text{Low}}  = \WW^{(\ell)} - \WW^{(\ell)}_{\text{High}},
\end{equation*}  
\endgroup
where the central design decision is the choice of the partition index $k$. 
We consider two approaches. The first fixes a spectral energy threshold $R \in [0,1]$ 
and selects $k$ as the smallest index that captures at least a fraction $R$ of the 
total spectral energy:
\begin{equation}
k = \argmin_{j} \left\{ j \;\middle|\; \frac{\sum_{i=1}^j \sigma_i^2}{\sum_{i=1}^{\min(m, n)} \sigma_i^2} \geq R \right\}.
\label{eq:energy-threshold}
\end{equation}
This variant is interpretable, but introduces the manual threshold $R$ as a 
hyperparameter. The second approach eliminates this requirement entirely by selecting 
$k$ automatically via the entropy-based effective rank~\citep{roy2007effective}:
\looseness=-1
\begin{gather}
    \label{eq:effective-rank}
    k^{(\ell)} = \bigg\lceil \exp \bigg( -\sum_{i} p_i^{(\ell)} \ln p_i^{(\ell)} \bigg) \bigg\rceil, \\
    \text{where } p_i^{(\ell)} = \frac{\sigma_i^{(\ell)}}{\sum_j \sigma_j^{(\ell)}}.
\end{gather}
\noindent
This adaptive boundary enables \methodname to dynamically accommodate the heterogeneous spectral properties across various layers and architectures, ensuring robust partitioning without manual hyperparameter selection.
Intuitively, the high-energy spectral component $\WW^{(\ell)}_{\text{High}}$ encodes concentrated task-specific adaptation, while the low-energy $\WW^{(\ell)}_{\text{Low}}$ contains residual updates that, despite potentially capturing noise, may preserve information critical for OOD robustness.
\looseness=-1

\begin{figure*}[t] 
    \centering
    \begin{subfigure}[t]{0.48\textwidth}
        \centering
        \includegraphics[width=\linewidth]{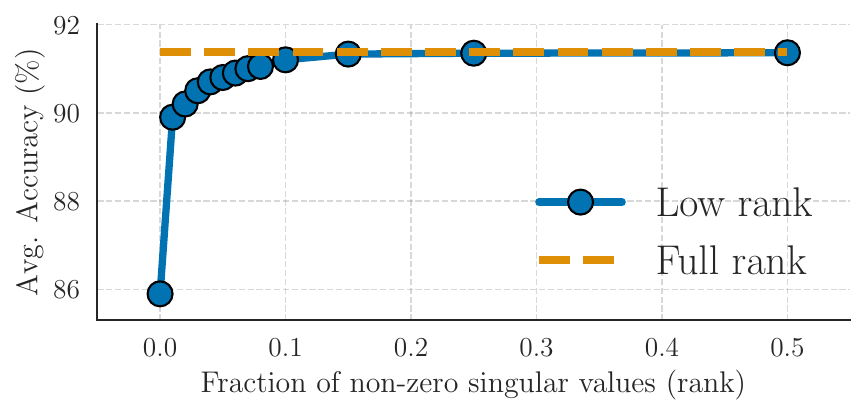}
        \caption{20-task benchmark~\citep{wang2024localizing}}
        \label{fig:TA_ViT_B_32_main}
    \end{subfigure}
    \begin{subfigure}[t]{0.48\textwidth}
        \centering
        \includegraphics[width=\linewidth]{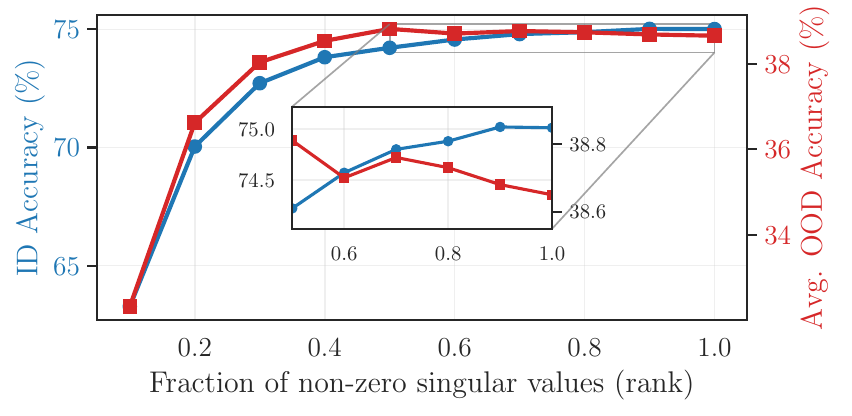}
        \caption{ImageNet benchmark}
        \label{fig:clip_on_imagenet_id_vs_ood_main}
    \end{subfigure}
    \caption{Effect of truncating low-energy components on different benchmarks.  
    (\subref{fig:TA_ViT_B_32_main}) On the 20-task vision benchmark, performance saturates after retaining only a small number of singular values, consistent with prior reports that low-rank updates suffice.  
    (\subref{fig:clip_on_imagenet_id_vs_ood_main}) On ImageNet with natural OOD shifts, truncation substantially reduces both ID and OOD accuracy, even when preserving 95\% of spectral energy. This highlights that, in large-scale fine-tuning, low-energy directions carry critical information for generalization and cannot simply be removed. See~\autoref{app:low-energy-directions} for further details.
    \looseness=-1
    }
    \vspace{-0.5cm}
    \label{fig:combined_analysis_TA_rank}
\end{figure*}

Having established the partition, a natural question is whether $\mathbf{W}^{(\ell)}_{\mathrm{Low}}$ can simply be discarded, that is, whether the low-energy component carries any signal worth preserving. Several recent studies~\citep{gargiulo2025tasksingularvectorsreducing, tang2025mergingmodelsflyretraining, stoica2024modelmergingsvdtie} have argued that $\WW^{(\ell)}_{\text{Low}}$ largely encodes noise and that discarding it can improve merging. These results, however, are mostly based on the standard task arithmetic benchmark~\citep{ilharco2023editing}, where the CLIP vision encoder is fine-tuned on a collection of relatively small-scale classification tasks, such as MNIST \citep{lecun1998mnist} and Cars \citep{krause20133d}. In \autoref{fig:TA_ViT_B_32_main}, we progressively remove a larger amount of singular values and track the average performance on the suite of 20 tasks proposed by \citet{wang2024localizing}.
In this regime, adaptation tends to be concentrated in a few dominant directions, so truncation appears effective.  
\looseness=-1

In contrast, our setting involves fine-tuning on the significantly more complex ImageNet benchmark and evaluating across natural distribution shifts (ImageNetV2, ImageNet-R, ImageNet-Sketch, ObjectNet, ImageNet-A). As shown in \autoref{fig:clip_on_imagenet_id_vs_ood_main}, removing low-energy components in this regime leads to degradation of both ID and OOD accuracy, even when retaining 95\% of spectral energy. This suggests that low-energy directions encode complementary information that is essential for OOD robustness. 
\looseness=-1

To balance specialization and generalization, we reweight the high- and low-energy components rather than discarding one of them. For each layer, the edited update is  
\looseness=-1
\begin{equation}
    \WW_{\text{\methodname}}^{(\ell)}
    = \lambda_{\text{High}}^{(\ell)}\,\WW_{\text{High}}^{(\ell)}
    + \lambda_{\text{Low}}^{(\ell)}\,\WW_{\text{Low}}^{(\ell)} ,
    \label{eq:losm-mix}
\end{equation}
where coefficients $\lambda_{\text{High}}^{(\ell)}$ and $\lambda_{\text{Low}}^{(\ell)}$ are determined adaptively.  
We now turn to how low-energy directions should be weighted relative to the dominant ones. We rely on two complementary signals derived directly from the model. The first comes from the singular value spectrum itself: when the decay is steep, most adaptation is captured by the leading singular values, suggesting that residual directions are less informative. When the singular value spectrum is flat, however, the contribution of weaker directions is more substantial. To capture this behavior, we define a spectral decay ratio,  
\looseness=-1
\begingroup
\setlength{\abovedisplayskip}{4pt} \setlength{\belowdisplayskip}{4pt}
\begin{equation}
\rho^{(\ell)} = \left(\frac{\sigma_{k+1}\left(\WW^{(\ell)}\right)}{\sigma_{1}\left(\WW^{(\ell)}\right)}\right)^2,
\label{eq:spectral-decay}
\end{equation}
\endgroup
which is small when the spectrum decays steeply and large when it is flat.  
\looseness=-1
The second signal represents the fractional energy of the low-rank subspace:
{ \setlength{\abovedisplayskip}{4pt} \setlength{\belowdisplayskip}{4pt}
\begin{equation}
\label{eq:second-signal}
\cos ^2 \alpha^{(\ell)}=\frac{\left\|\WW_{\mathrm{Low}}^{(\ell)}\right\|_F^2}{\left\|\WW^{(\ell)}\right\|_F^2} \in[0,1]
\end{equation}
}

where $\cos \alpha^{(\ell)}$ measures the fraction of update energy carried by low-energy directions; see \autoref{app:connection} for more details. Larger $\cos \alpha^{(\ell)}$ indicates that the fine-tuning update is spread over many weak directions rather than being concentrated in a small number of dominant singular vectors. As we show in \autoref{app:CKA} using CKA \citep{kornblith2019similarity} on hidden representations, these low-energy directions preserve pretrained features on OOD inputs while providing only mild specialization on ID. While~\autoref{fig:combined_analysis_TA_rank} demonstrates that low-energy directions containing vital OOD signals may be present in the tail,~\autoref{eq:second-signal} ensures we \textit{only} preserve this tail when the signal-to-noise ratio, captured by alignment $\cos \alpha$, warrants it, avoiding the noise injection of pure uniform averaging.
\looseness=-1

To determine the coefficients for $\WW_{\text{\methodname}}^{(\ell)}$ in \autoref{eq:losm-mix}, we define a weighting rule that adapts to the model's spectral and geometric state.  We require the low-energy coefficient $\lambda_{\text{Low}}^{(\ell)}$ to satisfy four natural boundary conditions: 
(i) {Suppression:} $\lambda_{\text{Low}} \to 0$ when the spectrum is sharp and misaligned; 
(ii) {Retention:} $\lambda_{\text{Low}} \to 1$ when the spectrum is flat or highly aligned; 
(iii) {Spectral Baseline:} $\lambda_{\text{Low}} = \rho$ when alignment is zero; and 
(iv) {Alignment Baseline:} $\lambda_{\text{Low}} = \cos \alpha$ when spectral mass is negligible. \looseness=-1

The minimal bilinear function satisfying the constraints is:
\begingroup
\begin{equation}
\lambda_{\text{Low}}^{(\ell)} = \rho^{(\ell)} + \big(1 - \rho^{(\ell)}\big)\, \cos \alpha^{(\ell)}, 
\quad \lambda_{\text{High}}^{(\ell)} = 1 - \lambda_{\text{Low}}^{(\ell)}.
\label{eq:losm-lambdas}
\end{equation}
\endgroup
This formulation ensures that re-emphasizing $\WW_{\mathrm{Low}}$ occurs only when the update is distributed across weak directions or carries significant fractional energy, properties associated with enhanced OOD robustness.  
We provide a formal derivation and sensitivity analysis in \autoref{app:Interpretability_of_Coefficients}. 
\looseness=-1

\newcommand{\modelidbest}{Best-ID\xspace}
\newcommand{\modelidworst}{Worst-ID\xspace}
\newcommand{\modeloodbest}{Best-OOD\xspace}
\newcommand{\modeloodworst}{Worst-OOD\xspace}

\renewcommand{\modelidbest}{ID$^+$\xspace}
\renewcommand{\modelidworst}{ID$^-$\xspace}
\renewcommand{\modeloodbest}{OOD$^+$\xspace}
\renewcommand{\modeloodworst}{OOD$^-$\xspace}
\section{Experiments}
We next evaluate \methodname across both vision and language domains. On CLIP models, we compare against prior merging methods such as Model Soups and ModelStock, testing whether \methodname can achieve competitive or superior robustness using \textit{only a single} fine-tuned checkpoint. 
We then extend the evaluation to large language models from the Qwen family~\citep{Qwen3}, where we assess their effectiveness on mathematical reasoning and multiple-choice benchmarks. Finally, we study its integration with Wise-FT to examine complementarity with interpolation-based robustness methods. 
Additional experiments on ConvNeXt \citep{liu2022convnet2020s}, reported in \autoref{app:convnext}, confirm that the method generalizes beyond Transformer-based architectures. We also verify compatibility with parameter-efficient fine-tuning in \autoref{app:peft}: \methodname yields consistent improvements when applied to LoRA-merged checkpoints, with gains that are attenuated but principled --- a direct consequence of the rank structure of the low-rank update.
\looseness=-1

\subsection{Merging Vision Transformers}
\begin{table}[t]
    \centering
    \setlength{\tabcolsep}{1pt}
    \renewcommand{\arraystretch}{1.0}
    \caption{\textbf{Merging methods on CLIP ViT-B/32.} Metrics are Top-1 ID and average OOD accuracy for best $(+)$ and worst $(-)$ models. \textit{Cost} indicates required checkpoints (Soups $\le$70, ModelStock 2, \methodname 1). \methodname matches or exceeds multi-model baselines using a single checkpoint, recovering up to $\sim$8\% OOD accuracy on collapsed models. \textbf{Top:} Our variants. \textbf{Bottom:} Comparison vs. ModelStock.}
    \vspace{-0.3cm}
    \resizebox{\columnwidth}{!}{%
    \begin{tabular}{l|cc|cc}
        \toprule
        \rowcolor{lightgray}
        \multicolumn{5}{l}{\textit{\textbf{General Baselines}}} \\
        \midrule
         & \textbf{ID} & \textbf{OOD} & \multicolumn{2}{c}{\textbf{Cost}} \\
        Initialization & 75.4\% & 46.2\% & \multicolumn{2}{c}{--} \\
        Uniform Model Soup & 79.9\% & 51.4\% & \multicolumn{2}{c}{70} \\
        Greedy Model Soup & \textbf{81.0\%} & 50.7\% & \multicolumn{2}{c}{70} \\
        FT model (\modeloodbest) & 78.11\% & 50.67\% & \multicolumn{2}{c}{--} \\
        FT model (\modeloodworst) & 76.53\% & 36.71\% & \multicolumn{2}{c}{--} \\
        FT model (\modelidbest) & 80.38\% & 47.96\% & \multicolumn{2}{c}{--} 
        \\
        FT model (\modelidworst) & 74.99\% & 38.64\% & \multicolumn{2}{c}{--} \\
        
        \midrule
        \midrule

        \rowcolor{lightgray}
        \multicolumn{5}{l}{\textit{\textbf{Our Single-Model Methods}}} \\
        \cmidrule{2-5}
         & \multicolumn{2}{c|}{\textbf{\methodname} ($R=0.8$)} & \multicolumn{2}{c}{\textbf{\methodname}} \\
        \textbf{Model} & \textbf{ID} & \textbf{OOD} & \textbf{ID} & \textbf{OOD} \\
        \midrule
        \modeloodbest & 78.29\% & \textbf{51.60\%} & 78.21\% & 50.91\% \\
        \modeloodworst & 78.55\% & 44.21\% & 78.03\% & 42.78\% \\
        \modelidbest   & 80.03\% & 49.95\% & 80.34\% & 48.94\% \\
        \modelidworst  & 77.76\% & 46.54\% & 77.38\% & 45.22\% \\
        
        \midrule
        \midrule

        \rowcolor{lightgray}
        \multicolumn{5}{l}{\textit{\textbf{Pairwise Methods}}} \\
        \cmidrule{2-5}
         & \multicolumn{2}{c|}{\textbf{ModelStock}} & \multicolumn{2}{c}{\textbf{\methodname}} \\
        \textbf{Model Pair} & \textbf{ID} & \textbf{OOD} & \textbf{ID} & \textbf{OOD} \\
        \midrule
        \modelidbest, \modeloodbest   & 79.39\% & 50.53\% & 80.10\% & 51.37\% \\
        \modelidbest, \modeloodworst  & 78.43\% & 49.39\% & 78.87\% & 48.37\% \\
        \modelidbest, \modelidworst   & 78.32\% & 50.63\% & 79.02\% & 50.12\% \\
        \modeloodbest, \modeloodworst & 76.76\% & 48.41\% & 78.44\% & 49.26\% \\
        \modeloodbest, \modelidworst  & 77.49\% & 51.02\% & 78.05\% & 50.48\% \\
        \modelidworst, \modeloodworst & 78.09\% & 47.81\% & 78.94\% & 47.79\% \\
        \bottomrule
    \end{tabular}%
    }
    \label{tab:CLIP_ViT-B_32_Comprehensive_LP_Soups}
\end{table}
We begin with CLIP ViT-B/32 models fine-tuned on ImageNet, the standard testbed for soup-based approaches. In-distribution (ID) accuracy is measured on ImageNet-1K, while out-of-distribution (OOD) robustness is assessed on five natural shifts: ImageNet-V2, ImageNet-R, ImageNet-Sketch, ImageNet-A, and ObjectNet. We use the 70 CLIP ViT-B/32 checkpoints released by \citet{wortsman2022model}. Since presenting results for all 70 models would be impractical, we focus on four representative cases: the checkpoint with the highest ID accuracy (\modelidbest), the lowest ID accuracy (\modelidworst), the highest OOD accuracy (\modeloodbest), and the lowest OOD accuracy (\modeloodworst). This selection allows us to illustrate how \methodname behaves across both strong and weak checkpoints.
We report results using both a fixed threshold ($R=0.8$) and the hyperparameter-free \methodname. We ablate the fixed threshold in~\autoref{app:variance_threshold} and~\autoref{sec:Analysis_Discussion}.
\looseness=-1

\begin{figure}[t!]
    \centering
      \includegraphics[width=\linewidth]{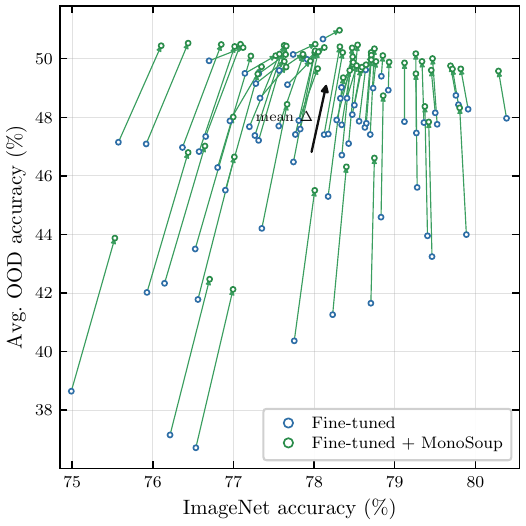}
    \caption{\textbf{Application of \methodname on 70 CLIP checkpoints.} MonoSoup improves OOD accuracy across all 70 CLIP ViT-B/32 checkpoints by \citet{wortsman2022model} while maintaining competitive ID performance. Each arrow originates at the fine-tuned checkpoint's performance and points to the post-MonoSoup result; the dominant upward trend confirms that OOD gains are systematic across all hyperparameter configurations, with at most marginal ID tradeoffs in a small number of cases. \looseness=-1}
    \label{fig:quiver_plot}
\end{figure}

\autoref{tab:CLIP_ViT-B_32_Comprehensive_LP_Soups} presents the results for vision transformers. We also report the number of checkpoints required by each method in the \textit{Cost} column. \methodname consistently matches or surpasses multi-model approaches while requiring only a single checkpoint. On the strongest OOD model, Avg. OOD increases from 50.67\% to 51.60\%, surpassing Greedy Soup without aggregating 70 models. The automated \methodname reaches 50.91\% without manual tuning. On weaker checkpoints, \methodname recovers collapsed representations, improving Avg. OOD by +7.9\% for the worst-ID model and +7.5\% for the worst-OOD model. Notably, automated \methodname follows these gains closely across all representative checkpoints, demonstrating its reliability as a \textit{data-free} and \textit{hyperparameter-free} solution. 
\looseness=-1

To assess robustness beyond the four representative checkpoints, \autoref{fig:quiver_plot} reports the effect of \methodname across all 70 fine-tuned models. The dominant upward trend confirms that OOD gains are systematic across the full range of hyperparameter configurations, with at most marginal ID tradeoffs in a small number of cases.
\looseness=-1

When applied to pairs of fine-tuned models, \methodname achieves a better balance between ID and OOD performance compared to ModelStock. This is especially pronounced in the pair of (\modelidbest, \modeloodbest) where \methodname dominates on both objectives. This shows that the benefits of the proposed method extend beyond the single-checkpoint setting.
\looseness=-1
\subsection{Merging Large Language Models}
\label{sec:LLMs}
To test the generality of \methodname beyond vision, we evaluate it on large language models. Specifically, we fine-tune multiple variants of the Qwen3-0.6B model, each with  different hyperparameter configurations, such as learning rates, training epochs, and schedules. Further details are provided in~\autoref{sec:Qwen}. All variants are trained on a mixture spanning mathematical reasoning and multiple-choice question answering: MetaMathQA~\citep{MetaMath}, which augments GSM8K~\citep{GSM8K} and MATH~\citep{hendrycksMATH}; DeepMind-AquaRat~\citep{AQUA-RAT}; and the multiple-choice datasets OpenBookQA~\citep{OpenBookQA} and SciQ~\citep{SciQ}. 
\looseness=-1

\begin{table}[t]
    \centering
    \setlength{\tabcolsep}{3pt} 
    \renewcommand{\arraystretch}{1.05} 
    \caption{\textbf{Performance of Qwen3-0.6B on mathematical benchmarks.} We compare our single-model methods (\textbf{\methodname} and \textbf{\methodname} with fixed variance threshold~($R=0.8$)) against the LiNeS baseline and pairwise merging (ModelStock) across three training settings (\texttt{M-1}, \texttt{M-2}, \texttt{M-3}; details in~\autoref{sec:Qwen}). \textbf{\methodname} and \textbf{\methodname}~($R=0.8$) consistently outperform the fine-tuned baseline and LiNeS, showing the largest gains on the challenging GSM$_{\text{Plus}}$ and GSM$_{\text{Plat}}$ tasks. Notably, it matches or surpasses ModelStock while using \textit{only a single} checkpoint.
    \looseness=-1
    }
    \vspace{-0.25cm}
    \resizebox{\linewidth}{!}{%
        \begin{tabular}{ll|ccccc}
        \toprule
        \multicolumn{2}{c|}{\textbf{Configuration}} & \multicolumn{5}{c}{\textbf{Benchmarks}} \\
        \textbf{Setting} & \textbf{Method} & \textbf{GSM8K} & \textbf{GSM$_{\text{Plus}}$} & \textbf{GSM$_{\text{Plat}}$} & \textbf{SciQ} & \textbf{MMLU-P} \\
        \midrule
        
        \textit{Reference} & \textsc{Qwen3-0.6B-Base} & 52.6 & 22.5 & 50.1 & 92.6 & 33.7 \\
        \midrule
        
        \multirow{4}{*}{\shortstack[l]{\textbf{M-1}\\ \textit{(Linear)}}} 
          & Standard & 55.8 & 29.5 & 55.6 & 94.6 & 35.6 \\
          & + \textsc{LiNeS} & 56.2 & 30.1 & 56.3 & 94.9 & 36.7 \\
          & + \textsc{\methodname} $R=0.8$ \textbf{(Ours)} & \textbf{56.7} & \textbf{30.3} & \textbf{56.7} & \textbf{95.1} & \textbf{37.2} \\
          & + \textsc{\methodname} \textbf{(Ours)} & \textbf{56.9} & \textbf{30.2} & \textbf{56.8} & \textbf{95.3} & \textbf{37.5} \\
        \midrule
        
        \multirow{4}{*}{\shortstack[l]{\textbf{M-2}\\ \textit{(Cosine)}}} 
          & Standard & 56.1 & 30.8 & 58.5 & 94.5 & 35.9 \\
          & + \textsc{LiNeS} & 56.5 & 31.4 & 58.9 & 95.2 & 36.1 \\
          & + \textsc{\methodname} $R=0.8$ \textbf{(Ours)} & \textbf{56.6} & \textbf{31.7} & \textbf{59.3} & 95.1 & \textbf{36.6} \\
          & + \textsc{\methodname} \textbf{(Ours)} & \textbf{56.8} & \textbf{31.9} & \textbf{59.4} & \textbf{95.3} & \textbf{36.9} \\
        \midrule
        
        \multirow{4}{*}{\shortstack[l]{\textbf{M-3}\\ \textit{(Ext.)}}} 
          & Standard & 55.9 & 30.6 & 57.3 & 93.5 & 34.8 \\
          & + \textsc{LiNeS} & 56.3 & 30.8 & 58.1 & 93.8 & 35.2 \\
          & + \textsc{\methodname} $R=0.8$ \textbf{(Ours)} & 56.6 & 31.4 & 58.8 & 94.2 & 35.5 \\
          & + \textsc{\methodname} \textbf{(Ours)} & \textbf{56.9} & \textbf{31.7} & \textbf{59.2} & \textbf{94.5} & \textbf{35.8} \\
        \midrule
        \midrule
        
        \multirow{3}{*}{\shortstack[l]{\textbf{Pairwise}}} 
          & ModelStock (\texttt{M-1}, \texttt{M-2}) & 56.6 & 31.5 & 59.0 & 94.9 & 36.8 \\
          & ModelStock (\texttt{M-1}, \texttt{M-3}) & 56.4 & 31.3 & 58.7 & 94.8 & 36.4 \\
          & ModelStock (\texttt{M-2}, \texttt{M-3}) & 56.5 & 31.6 & 58.9 & 95.2 & 36.7 \\
        \bottomrule
        \end{tabular}
        \label{tab:qwen_mathematical_reasoning_comparison}
    }
\end{table}

For evaluation, we propose a benchmark that spans a wide spectrum of reasoning difficulty. It includes GSM8K and SciQ, which overlap with the training mixture and are treated as in-distribution tasks, as well as GSM$_{\text{Plus}}$~\citep{GSM-Plus}, GSM8K$_{\text{Platinum}}$~\citep{GSM8KPlatinum}, and MMLU-Pro-Math~\citep{MMLU-Pro}, which probe advanced or adversarial reasoning skills not explicitly covered during training and thus serve as out-of-distribution evaluations. While this ID/OOD split is less rigid than in vision benchmarks such as CLIP, the increasing task difficulty provides an analogous way to assess robustness and generalization in the language domain.  
\looseness=-1

The results are presented in \autoref{tab:qwen_mathematical_reasoning_comparison} and demonstrate consistent improvements across all fine-tuned variants. \methodname improves over the baseline Qwen3-0.6B models and surpasses LiNeS across every benchmark, with the largest gains observed on GSM$_{\text{Plus}}$ and GSM8K$_{\text{Platinum}}$ (+9.2 points each). Compared to ModelStock, which remains competitive when merging pairs of models, \methodname matches or exceeds its performance while requiring only a single checkpoint. These findings mirror the results on CLIP: \methodname enhances robustness and generalization \textit{without reliance on ensembles}, scaling naturally when multiple models are available but remaining highly effective in the single-checkpoint setting.
\looseness=-1

We further evaluate \methodname along two complementary axes of distributional shift in \autoref{sec:ood}: a cross-lingual shift on MGSM~\citep{mgsm}, where the model is evaluated on non-English languages unseen during fine-tuning, and a cross-domain shift on ARC, HellaSwag, and MMLU, where the target capabilities are entirely absent from the mathematics-focused training data. Across both axes, spectral reweighting not only recovers but refines the multilingual and general-purpose representations eroded by fine-tuning, surpassing even the pretrained baseline on average across languages.
\looseness=-1

\subsection{Integration with Wise-FT}
\begin{figure*}[t!]
    \centering
    \includegraphics[width=\linewidth]{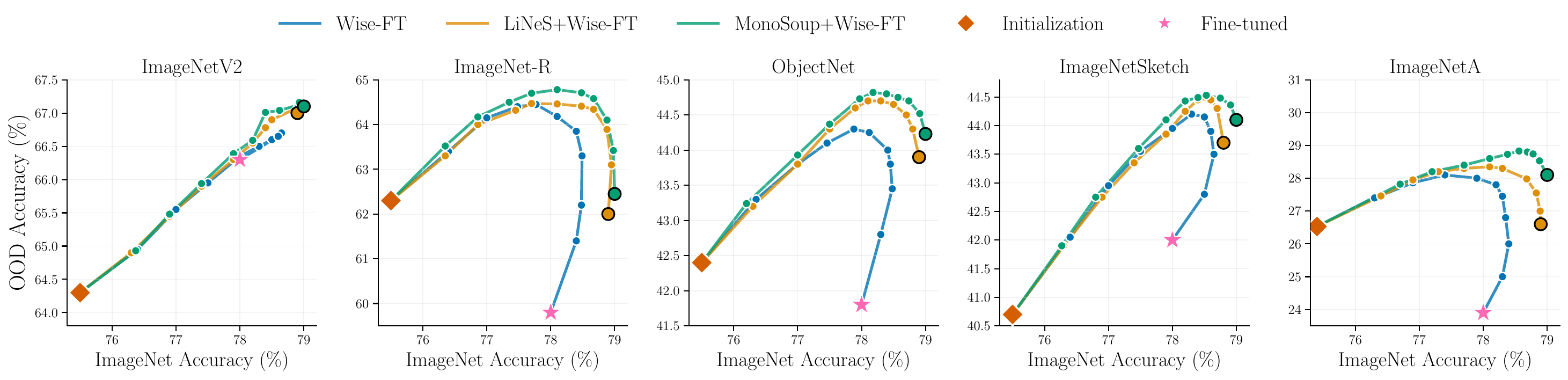}
    \caption{\methodname integrated with Wise-FT on CLIP ViT-B/32. \methodname improves ID and OOD accuracy across individual checkpoints. When combined with Wise-FT, the Pareto fronts consistently dominate those of Wise-FT and LiNeS, showing that \methodname provides a stronger endpoint for interpolation-based robustness.
    \looseness=-1}
    \vspace{-0.35cm}
    \label{fig:combined_analysis_Wise_FT}
\end{figure*}
We next study how \methodname interacts with Wise-FT~\citep{wortsman2022robust}, which linearly interpolates between pre-trained and fine-tuned weights to produce a continuum of models tracing the ID--OOD trade-off. This setting allows us to test whether \methodname can serve as a stronger endpoint for interpolation-based robustness methods. We compare against two baselines: Wise-FT alone, and LiNeS~\citep{wang2024lines}, a post-training technique that linearly scales fine-tuning updates according to layer depth. Unlike \methodname, which is fully data-free and adapts coefficients at the level of individual subspaces within each layer to account for heterogeneous layer dynamics, LiNeS employs a single coefficient per transformer block, neglecting the distinct fine-tuning dynamics of attention versus feedforward layers~\citep{yang2024adamerging}, and requires labeled data for hyperparameter selection.

We report results averaged over all 70 checkpoints in \autoref{fig:combined_analysis_Wise_FT}. Consistent with the results of \autoref{fig:quiver_plot}, \methodname applied in isolation improves both ID and OOD performance over the fine-tuned baseline across all five distribution shifts, confirming that spectral reweighting alone yields a strictly better checkpoint. When used as the starting point for Wise-FT interpolation, the resulting Pareto fronts consistently dominate those of Wise-FT alone and LiNeS across all evaluated datasets. This demonstrates that \methodname is complementary to interpolation-based methods: by providing a stronger, data-free base checkpoint, it further amplifies the benefits of existing robustness techniques.
\looseness=-1

\begin{figure}[t!]
    \centering
    \includegraphics[width=\linewidth]{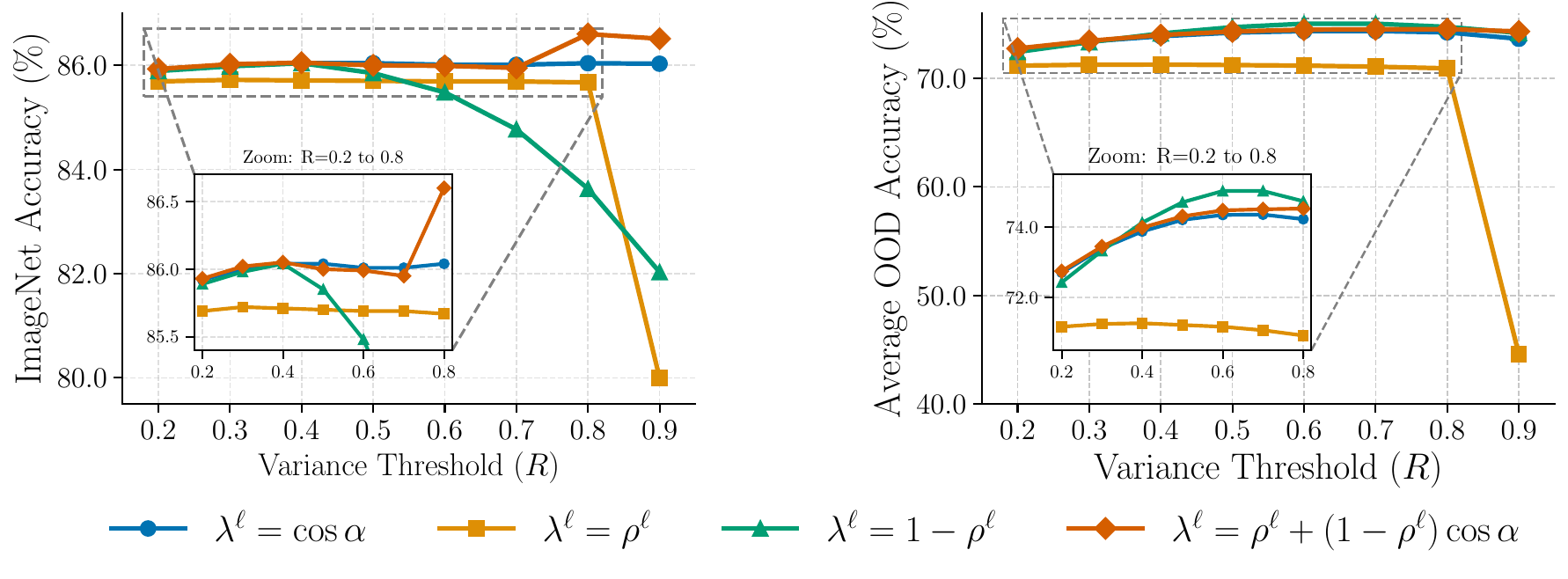}
    \caption{\textbf{Component Analysis.} Effect of varying the variance threshold $R$ and the contributions of each term in the coefficient $\lambda^\ell=\lambda^\ell_{\text{low}}$ on CLIP ViT-L/14. Results are stable across a wide range of $R$ values, and both the spectral decay and cosine overlap components contribute meaningfully to the final balance between ID and OOD performance.
    \looseness=-1
    }
    \vspace{-0.6cm}
    \label{fig:R_component_ViT_L_14}
\end{figure}

\subsection{Analysis and Discussion}
\label{sec:Analysis_Discussion}
We analyze two design aspects of \methodname: the effect of the spectral energy threshold $R$ and the role of each component in the mixing rule.
We focus on the fixed-$R$ formulation from \autoref{eq:energy-threshold} to shed light on the role of the spectral partition and the sensitivity of \methodname to this design choice.
Varying $R$ controls how much of the fine-tuning update is assigned to the high-energy subspace. The results on CLIP ViT-L/14, shown in the left panel of \autoref{fig:R_component_ViT_L_14}, reveal three clear patterns. When $R$ is too small, too much spectral mass is discarded, which hurts both ID and OOD performance. Very large values of $R$ keep almost all directions, saturating improvements and sometimes leading to collapse. Intermediate values around $0.7$–$0.85$ achieve the best balance, confirming that low-energy directions are important but must be modulated relative to dominant task-specific updates.
\looseness=-1

We also ablate the two signals in our mixing rule: spectral decay $\rho^\ell$ and alignment $\cos\alpha^\ell$. The right panel of \autoref{fig:R_component_ViT_L_14} shows that relying only on $\rho^\ell$ preserves ID accuracy but brings little OOD improvement, while relying only on $\cos\alpha^\ell$ improves OOD on weaker checkpoints but can reduce ID. Uniform mixing gives inconsistent results, and keeping only the high- or low-energy components degrades one side of the trade-off. In contrast, combining $\rho^\ell$ and $\cos\alpha^\ell$ consistently yields the strongest OOD performance without sacrificing ID, and remains stable across a wide range of $R$. These results validate the design of \methodname: both spectral decay and alignment provide complementary signals, and together they enable a principled way to retain the benefits of low-energy directions without undermining task-specific adaptations.
We further verify that \methodname applies genuinely anisotropic scaling across layers, rather than collapsing to scalar interpolation; see~\autoref{sec:Anisotropic_Scaling}.
\looseness=-1

\section{Related Work}
\label{sec:related_work}

\textbf{Representation Collapse and Robust Fine-Tuning.}
The prevalent pre-train-then-finetune paradigm often leads to a degradation of a model's general-purpose knowledge, resulting in a decline in out-of-distribution (OOD) performance \citep{kumar2022finetuning}. This phenomenon, termed \textit{representation collapse} \citep{aghajanyan2020better}, has motivated a significant body of research focused on making the fine-tuning process more robust. Such methods typically regularize the fine-tuning process to preserve the valuable features learned during pre-training, thereby improving OOD generalization \citep{gouk2021distancebaseda, zhangfinetuning, razdaibiedina2023representationa, lee2023surgicala, goyal2023finetune, wortsman2022robust, mao2024context, nam2024lipsumftrobustfinetuningzeroshot, oh2024calibratedrobustfinetuningvisionlanguage}. While effective, these approaches intervene directly in the computationally expensive fine-tuning stage, motivating the exploration of more efficient, post-hoc alternatives.

\textbf{Mode Connectivity and Post-hoc Merging.}
An alternative line of work focuses on post-hoc manipulation of model weights, a practice theoretically grounded in the properties of the neural network loss landscape. Seminal works showed that distinct solutions found by separate training runs can be connected by a non-linear path of low loss \citep{garipov2018loss, draxler2018essentially}. More critically for fine-tuning, \citet{frankle2020linear} demonstrated the existence of \textit{linear mode connectivity} between models that share the same pre-trained initialization. This property enables simple yet powerful techniques like weight averaging. By interpolating the parameters of multiple fine-tuned checkpoints, these methods have been shown to find wider, more robust optima \citep{izmailov2018averaging, wortsman2021learning}, leading to improved in-distribution \citep{wortsman2022model, jang2024model} and out-of-distribution performance \citep{wortsman2022robust, rame2022diwa, rame2023model} without requiring additional inference costs.

\textbf{Unifying Principles of Successful Merging.}
Recent analyses of these merging techniques have revealed a unifying principle: successful weight-space ensembling reinforces dominant directions in the weight space that encode shared, task-relevant signals, while simultaneously suppressing noisy or misaligned directions that harm generalization \citep{wortsman2022model, jang2024model, rame2023model}. This insight has inspired the development of more sophisticated merging strategies that explicitly identify and manipulate these core components of the fine-tuning update \citep{gargiulo2025tasksingularvectorsreducing, tang2025mergingmodelsflyretraining, wang2024lines}. Beyond improving single-model robustness, these principles of weight-space arithmetic have been successfully extended to a broader range of applications, including multi-task learning \citep{ilharco2022patching, ilharco2023editing, dimitriadis2023pareto,dimitriadis2025pareto, yadav2023resolving,giraldo2025singleinput} and multi-objective alignment \citep{rame2024warm, zhong2024panacea}.
\looseness=-1

\section{Conclusion}
\label{sec:conclusion}
\vspace{-0.15cm}
In this paper, we introduced \methodname, a data-free method that reweights the spectral components of fine-tuning updates to improve both in-distribution accuracy and out-of-distribution robustness from a single checkpoint. Unlike prior approaches that depend on ensembles of fine-tuned models or carefully aligned pairs, \methodname compresses their benefits into a lightweight, single-model procedure that restores OOD performance even for weak checkpoints. Experiments on CLIP and Qwen benchmarks show its effectiveness across vision and language domains, demonstrating that robust gains are possible without the computational and storage overhead of multi-model methods. Looking forward, extending our approach beyond vision and language to other modalities offers a promising direction. Overall, our results highlight that the benefits of model soups can be retained, even strengthened, without the burden of maintaining large ensembles, making \methodname a practical plug-and-play tool for reliable model deployment. \looseness=-1

\section*{Impact Statement}

This paper presents \methodname, a post-hoc, data-free editing procedure that improves the in-distribution / out-of-distribution trade-off of fine-tuned models from a \textit{single} checkpoint. Its principal societal contribution is a substantial reduction in the compute and storage cost of robust deployment: as shown in~\autoref{tab:cost}, \methodname matches or exceeds multi-checkpoint baselines such as ModelSoups while reducing wall-clock cost by $\sim\!10^{3}$--$10^{4}\times$ and storage by an order of magnitude, displacing on the order of $240$ GPU-hours per deployment for a $14$B-parameter language model. Because public model repositories typically expose only a single best-performing checkpoint per task, operating on one checkpoint also lowers the barrier to robust fine-tuning for practitioners without the budget to reproduce large hyperparameter sweeps, and yields a deterministic, auditable edit that we view as a positive externality for the broader robustness literature. We emphasise, however, that the robustness improvements demonstrated here concern \textit{natural} distribution shifts and should not be interpreted as guarantees against adversarial perturbations underlying our spectral analysis; \methodname is therefore not a substitute for application-specific safety evaluation.

\section*{Acknowledgments}

We thank Alessandro Favero, Ke Wang, Amel Abdelraheem, and Seyed-Mohsen Moosavi-Dezfooli for insightful discussions and valuable feedback that helped shape this work.

\clearpage

\bibliography{references}
\bibliographystyle{icml2026}

\newpage
\appendix
\onecolumn

\section{Comparison with \single~Methods}
\begin{table}[t!]  
\caption{A comprehensive comparison of \single~methods on CLIP ViT-B/32. Performance metrics are presented for ImageNet and the average of five OOD datasets across fine-tuned models utilizing \textit{LP initialization} released by~\citet{wortsman2022model}. For Wise-FT, we sweep the interpolation coefficient $\alpha$ and report the best-performing setting with respect to Avg. OOD.}
\begin{center} 
\setlength{\tabcolsep}{8pt} 
\renewcommand{\arraystretch}{0.6} 
\resizebox{0.7\textwidth}{!}{%
\begin{tabular}{l||cc} 
\toprule 
\multirow{2}{*}{\textbf{Method}} & \multicolumn{2}{c}{\textbf{\textit{Linear Probe initialization}}} \\ 
\cmidrule{2-3} 
& \textbf{ID (ImageNet)} & \textbf{Avg. OOD} \\ 
\midrule 
\multicolumn{3}{l}{\textit{\textbf{Baseline}}} \\ 
\midrule 
CLIP LP Initialization & 75.40\% & 46.20\% \\ 
\midrule 
\multicolumn{3}{l}{\textit{\textbf{Fine-Tuned Models}}} \\ 
\midrule 
FT model (Best Avg. OOD)  & 78.11\% & 50.67\% \\ 
FT model (Worst Avg. OOD)  & 76.53\% & 36.71\% \\ 
FT model (Best ID)  & 80.38\% & 47.96\% \\ 
FT model (Worst ID)  & 74.99\% & 38.64\% \\ 
\midrule 
\multicolumn{3}{l}{\textit{\textbf{LiNeS}}} \\ 
\midrule 
LiNeS+FT model (Best Avg. OOD)  & 78.25\% & 51.56\%  \\ 
LiNeS+FT model (Worst Avg. OOD)  & 78.13\%  & 41.33\%  \\ 
LiNeS+FT model (Best ID)  & {80.46\%}  & 49.10\% \\ 
LiNeS+FT model (Worst ID)  & 77.31\%  & 44.96\% \\ 
\midrule 
\multicolumn{3}{l}{\textit{\textbf{Wise-FT}}} \\ 
\midrule 
Wise-FT+FT model (Best Avg. OOD)  & 78.12\% & 51.54\% \\ 
Wise-FT+FT model (Worst Avg. OOD)  & 77.11\%  & 45.2\%  \\ 
Wise-FT+FT model (Best ID)  & 78.73\% & 49.12\%  \\ 
Wise-FT+FT model (Worst ID)  & 78.00\% & 45.75\%  \\ 
\midrule 
\multicolumn{3}{l}{\textit{\textbf{Our Proposed Method}}} \\ 
\midrule 
\methodname+FT model (Best Avg. OOD)  & 78.29\% & {51.60\%}  \\ 
\methodname+FT model (Worst Avg. OOD)  & 78.55\% & 44.21\%  \\ 
\methodname+FT model (Best ID)  & 80.03\%  & 49.95\% \\ 
\methodname+FT model (Worst ID)  & 77.76\%  & 46.54\%  \\ 
\bottomrule 
\end{tabular} 
} 
\end{center} 
\label{tab:Model_Comparison_LP_Single}
\end{table}

\myparagraph{Linear Probing initialization~(LP init).}
\autoref{tab:Model_Comparison_LP_Single} presents a comprehensive analysis of \single~methods for models with \textit{Linear Probing initialization~(LP init)}. \methodname significantly improves the M-14 model, which has the worst average OOD performance, increasing OOD accuracy from $36.71\%$ to $44.21\%$ (a gain of $7.5\%$) and ID accuracy by $2.02\%$. Furthermore, it enhances the M-31 model, which has the worst ID performance, achieving a $2.77\%$ accuracy gain. 
\looseness=-1

\section{Zero-shot initialization~(ZS init)}
\label{app:zero-shot-initialization}
\begin{table*}[t!] 
\caption{Comparison of \single~methods on CLIP ViT-B/32 with \textit{zero-shot initialization~(ZS init)}. We use the two publicly available checkpoints from \citet{jang2024model} and use their reported numbers for the Model Soups baselines with 48 models.} 
\vspace{-0.35cm}
\begin{center} 
\setlength{\tabcolsep}{10pt} \renewcommand{\arraystretch}{0.6} \resizebox{0.7\textwidth}{!}{\begin{tabular}{l||cc|c} \toprule \multirow{2}{*}{\textbf{Method}} & \multicolumn{3}{c}{\textbf{\textit{Zero-Shot Initialization}}} \\ \cmidrule{2-4} & \textbf{ID~(ImageNet)} & \textbf{Avg. OOD} & \textbf{\textit{Cost}} \\ \midrule \multicolumn{4}{l}{\textit{\textbf{Baselines}}} \\ \midrule Initialization & 63.3\% & 48.5\% & 0 \\ \hline Vanilla FT 1 & 78.1\% & 46.7\% & 1 \\ Vanilla FT 2 & 78.3\% & 46.9\% & 1 \\ \midrule \multicolumn{4}{l}{\textit{\textbf{LiNeS}}} \\ \midrule LiNeS~($\alpha$=0.1, $\beta$=0.9) + FT 1 & 78.7\%  & 52.2\%  & 1 \\ LiNeS~($\alpha$=0.5, $\beta$=0.5) + FT 1 & {78.9\%} & 51.1\%  & 1\\ LiNeS~($\alpha$=0.1, $\beta$=0.9) + FT 2 & 78.5\%  & 51.9\% & 1 \\ LiNeS~($\alpha$=0.5, $\beta$=0.5) + FT 2 & {79.0\%}  & 51.1\% & 1\\ \midrule \multicolumn{4}{l}{\textit{\textbf{Wise-FT}}} \\ \midrule Wise-FT+ FT 1 & 78.8\% & 52.5\%  & 1\\ Wise-FT+ FT 2 & 78.8\% & 52.6\% & 1\\ \midrule \multicolumn{4}{l}{\textit{\textbf{Prior \textit{Soups}-Merging Methods}}} \\ \midrule Uniform Model Soup & 79.7\% & 52.0\% & 48 \\ Greedy Model Soup & {80.4\%} & 50.8\% & 48 \\ ModelStock & 79.8\% & 50.9\% & 2 \\ \midrule \multicolumn{4}{l}{\textit{\textbf{Our Proposed Method}}} \\ \midrule \methodname w/ Vanilla FT 1 & 78.9\%  & 53.1\% & 1 \\ \methodname w/ Vanilla FT 2 & 78.9\% & 53.2\%  & 1 \\ \methodname w/ Avg. of FT 1\&2 & 79.0\%  & {53.8\%}  & 2 \\ \bottomrule \end{tabular} } \end{center} \label{tab:CLIP_ViT-B_32_Single_ZeroShot_appendix} 
\end{table*}
\autoref{tab:CLIP_ViT-B_32_Single_ZeroShot_appendix} presents a comprehensive analysis of \single~methods for models with \textit{zero-shot initialization~(ZS init)}. We use two publicly available \textit{ZS}-initialized checkpoints from \citet{jang2024model}. Our proposed \methodname achieves improvements of $6.4\%$ in average OOD accuracy and $0.8\%$ and $0.6\%$ in ID accuracy across the respective experimental configurations. For comparison with \soup~methods in the two-checkpoint scenario, we apply our method to the average of two checkpoints, resulting in performance gains of $2.9\%$ over ModelStock, $3.0\%$ over GreedySoup, and $1.8\%$ over UniformSoups.

\section{Connection between $R$ and $\cos\alpha$}
\label{app:connection}
Let $r = \textrm{rank}(\WW)$ and let $\sigma_j = \sigma_j(\WW)$ denote the $j$-th singular value of $\WW$ for $j \in \{1, \ldots, r\}$. The Frobenius norm of $\WW$ can be expressed in terms of its singular values as $\|\WW\|_{F}^2 = \sum_{j=1}^{r}\sigma_j^2$.    
Given a target variance capture ratio $R \in [0,1]$, we define the truncation index $k$ as
\begin{equation}
k = \argmin_{j\in \{1,\dots, r\}} \left\{ j \middle| \frac{\sum_{s=1}^j \sigma_s^2}{\|\WW\|_F^2} \geq R \right\}.
\end{equation}

Let $P_k = \nicefrac{\sum_{s=1}^{k}\sigma_s^2}{\|\WW\|_F^2}$ denote the actual fraction of variance captured by the truncated matrix $\WW_\HE$. By the definition of $k$, we have
\[
P_k \geq R.
\]

\begin{lemma}
If $k > 1$, then 
\[
P_k - \frac{\sigma_k^2}{\|\WW\|_F^2} < R \leq P_k.
\]
\end{lemma}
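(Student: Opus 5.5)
The argument is a direct consequence of the minimality built into the definition of $k$ as an $\argmin$; no earlier result beyond that definition is needed. I would prove the two inequalities separately.

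The right-hand inequality $R \le P_k$ is immediate. The index $k$ is by definition an element of the set $\{ j \mid \sum_{s=1}^j \sigma_s^2/\|\WW\|_F^2 \ge R\}$, so $P_k \ge R$, as already noted before the lemma. The one point to check is that this set is nonempty, so that $k$ is well defined. Taking $j=r$ gives $\sum_{s=1}^r \sigma_s^2 / \|\WW\|_F^2 = 1 \ge R$, since $R\in[0,1]$ and $\|\WW\|_F^2=\sum_{j=1}^r\sigma_j^2$. I assume $\WW\neq 0$ so that the ratio is defined.

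For the left-hand inequality, I would use the hypothesis $k>1$. Then $j=k-1$ is an admissible index in $\{1,\dots,r\}$. Because $k$ is the \emph{smallest} index satisfying the threshold condition, $k-1$ must violate it, that is,
\[
P_{k-1} = \frac{\sum_{s=1}^{k-1}\sigma_s^2}{\|\WW\|_F^2} < R .
\]
It remains to rewrite the left side using the telescoping identity $P_{k-1} = P_k - \sigma_k^2/\|\WW\|_F^2$, which follows by splitting off the last term of the sum defining $P_k$. Substituting this gives $P_k - \sigma_k^2/\|\WW\|_F^2 < R$, and combining with the first part yields the claim.

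There is no real obstacle here. The only subtle point is why $k>1$ is needed: when $k=1$, the index $k-1=0$ lies outside the index set, so minimality gives no strict inequality. The natural convention $P_0=0$ would still give $0<R$ only if $R>0$, which can fail when $R=0$. I would note this briefly to justify the hypothesis.
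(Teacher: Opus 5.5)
Your proposal is correct and follows essentially the same argument as the paper: minimality of $k$ forces $k-1$ to violate the threshold, and splitting off $\sigma_k^2$ from the partial sum gives $P_k - \sigma_k^2/\|\WW\|_F^2 < R$, combined with $R \le P_k$. Your extra remarks are accurate and go slightly beyond the paper's proof. You check that $k$ is well defined (using $\WW \neq 0$ and $P_r = 1 \ge R$), and you explain that the hypothesis $k>1$ matters only when $R = 0$.
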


\begin{proof}
Since $k$ is the minimum index satisfying the variance threshold, we have that $k-1$ does not satisfy it, i.e.,
\[
\frac{\sum_{s=1}^{k-1}\sigma_s^2}{\|\WW\|_F^2} < R.
\]
Observing that $\sum_{s=1}^{k-1}\sigma_s^2 = \sum_{s=1}^{k}\sigma_s^2 - \sigma_k^2$, we obtain
\[
P_k - \frac{\sigma_k^2}{\|\WW\|_F^2} < R.
\]
Combined with $P_k \geq R$, the result follows.
\end{proof}

Now, we establish the relationship with $\cos\alpha$. By the orthogonal decomposition $\WW = \WW_\HE + \WW_\LE$ and the Pythagorean theorem in Frobenius norm, we have
\[
\|\WW\|_F^2 = \|\WW_\HE\|_F^2 + \|\WW_\LE\|_F^2.
\]

Since $\cos\alpha = \nicefrac{\|\WW_\LE\|_F}{\|\WW\|_F}$, we obtain
\[
\cos^2\alpha = \frac{\|\WW_\LE\|_F^2}{\|\WW\|_F^2} = \frac{\|\WW\|_F^2 - \|\WW_\HE\|_F^2}{\|\WW\|_F^2} = 1 - P_k.
\]

\begin{theorem}
The angle parameter $\alpha$ satisfies the following bounds:
\[
\max\left(0, 1 - R - \frac{\sigma_k^2}{\|\WW\|_F^2}\right) < \cos^2\alpha \leq 1 - R.
\]
\end{theorem}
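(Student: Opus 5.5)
The plan is to combine the identity $\cos^2\alpha = 1 - P_k$, derived just before the statement, with the two-sided bound on $P_k$ from the preceding lemma. First, $P_k \geq R$ holds by definition of $k$ (and holds for every $k \ge 1$, not only $k>1$), so the upper bound follows immediately:
\[
\cos^2\alpha = 1 - P_k \leq 1 - R .
\]

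For the lower bound, in the case $k>1$ the lemma gives $P_k - \sigma_k^2/\|\WW\|_F^2 < R$. Rearranging yields $1 - P_k > 1 - R - \sigma_k^2/\|\WW\|_F^2$, that is,
\[
\cos^2\alpha > 1 - R - \frac{\sigma_k^2}{\|\WW\|_F^2}.
\]
In the case $k=1$ the lemma is not stated, but the same inequality still holds. We have $P_1 = \sigma_1^2/\|\WW\|_F^2$, so $P_1 - \sigma_1^2/\|\WW\|_F^2 = 0$. Whenever $R>0$ this gives $0<R$, and the strict inequality goes through exactly as before. I will note that the edge case $R=0$ only yields equality, and in that case the claimed strict bound degenerates. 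I would either assume $R>0$, which is implicit since the threshold is meant to be nontrivial, or remark that the term $1-R-\sigma_k^2/\|\WW\|_F^2$ is then compared against the $\max$ with $0$.

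It remains to handle the $\max(0,\cdot)$. Since $\cos^2\alpha = \|\WW_\LE\|_F^2/\|\WW\|_F^2 \ge 0$, we always have $\cos^2\alpha \ge 0$. Combining this with the strict bound above gives $\cos^2\alpha \ge \max(0, 1-R-\sigma_k^2/\|\WW\|_F^2)$. The only delicate point, which I expect to be the main obstacle, is strictness against the $0$ branch: if $k=r$, then $\WW_\LE=0$ and $\cos^2\alpha=0$, so the strict inequality $0<\cos^2\alpha$ fails. In that case, though, $P_k=1$ and the lemma gives $1-\sigma_r^2/\|\WW\|_F^2<R$, so the maximum is attained by $0$ only in a degenerate sense. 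I would therefore state the result with the strict inequality applying to the second argument of the $\max$ and a non-strict inequality for the $0$ branch, or equivalently assume $k<r$ so that $\WW_\LE \neq 0$. With that caveat the result follows directly from the lemma, the Pythagorean identity, and nonnegativity.
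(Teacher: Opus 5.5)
Your proposal is correct and follows the paper's proof. The upper bound comes from $\cos^2\alpha = 1-P_k$ together with $P_k \ge R$, the lower bound from the lemma's $P_k - \sigma_k^2/\|\WW\|_F^2 < R$, and the $\max$ from nonnegativity of $\cos^2\alpha$. Your caveats are genuine gaps in the paper's own argument, which treats only $k>1$ (your $k=1$ case with $R>0$ fills this in) and passes from $\cos^2\alpha \ge 0$ to a strict inequality against the $0$ branch; that step fails when $k=r$ (e.g.\ $R=1$), so assuming $k<r$ or weakening that branch to $\le$ is exactly what is needed.
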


\begin{proof}
From the relationship $\cos^2\alpha = 1 - P_k$ and the inequality $R \leq P_k$, we immediately obtain
\[
\cos^2\alpha \leq 1 - R.
\]

For the lower bound, when $k > 1$, we have $P_k - \nicefrac{\sigma_k^2}{\|\WW\|_F^2} < R$, which yields
\[
1 - \cos^2\alpha - \frac{\sigma_k^2}{\|\WW\|_F^2} < R,
\]
and therefore
\[
\cos^2\alpha > 1 - R - \frac{\sigma_k^2}{\|\WW\|_F^2}.
\]

Since $\cos^2\alpha \geq 0$, the lower bound becomes
\[
\cos^2\alpha > \max\left(0, 1 - R - \frac{\sigma_k^2}{\|\WW\|_F^2}\right).
\]
\end{proof}

\section{Low-Energy Directions}
\label{app:low-energy-directions}
In this section, we present additional observations that build upon and extend the analyses discussed in~\autoref{fig:combined_analysis_TA_rank}.

\begin{figure}[t!]
\centering
\includegraphics[width=\linewidth]{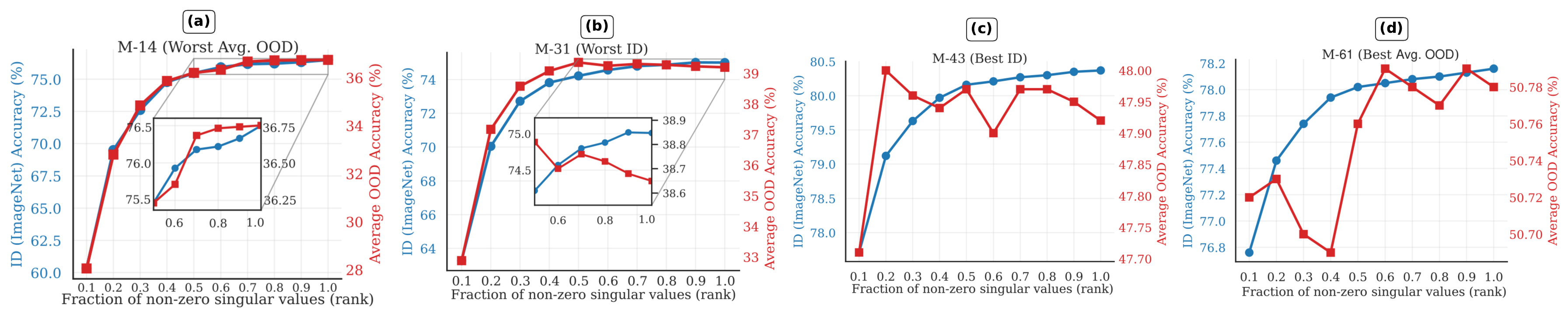}
\caption{The task vector rank consistently enhances performance on \textbf{\textit{both}} ID and OOD benchmarks. The x-axis represents the rank of the task vector, with blue curves indicating ID accuracy and red curves depicting average OOD accuracy.}
\label{fig:SVD_Spectrum_low_energy_appendix}
\end{figure}

\begin{figure}[t] 
    \centering
    
    \begin{subfigure}[t]{0.32\textwidth}
        \centering
        \includegraphics[width=\linewidth]{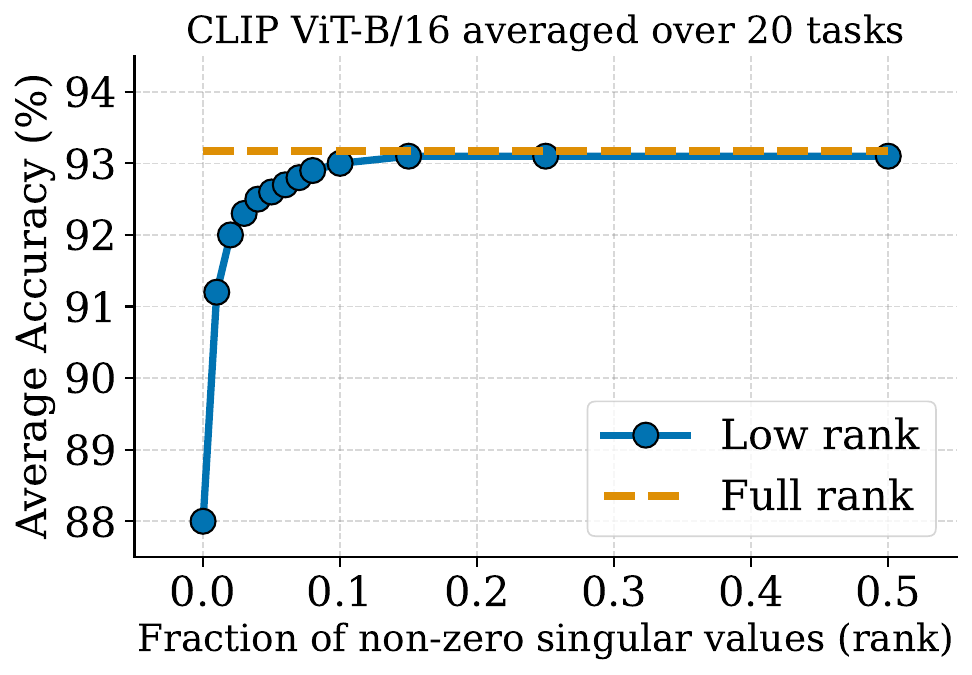}
        \caption{CLIP ViT-B/16}
        \label{fig:TA_ViT_B_16}
    \end{subfigure}
    \hfill
    \begin{subfigure}[t]{0.32\textwidth}
        \centering
        \includegraphics[width=\linewidth]{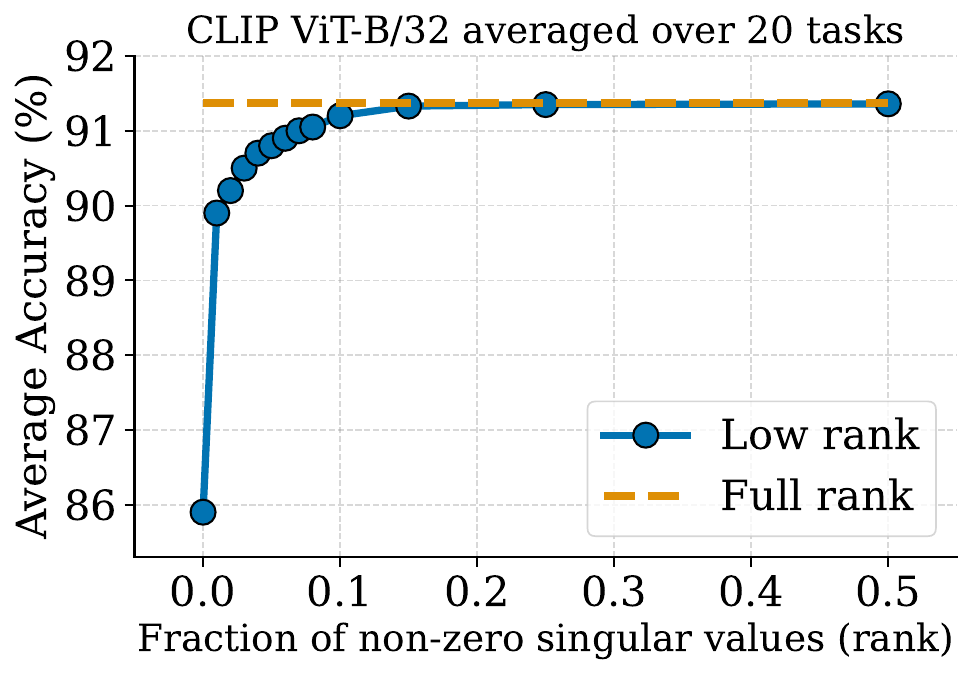}
        \caption{CLIP ViT-B/32}
        \label{fig:TA_ViT_B_32}
    \end{subfigure}
    \hfill
    \begin{subfigure}[t]{0.32\textwidth}
        \centering
        \includegraphics[width=\linewidth]{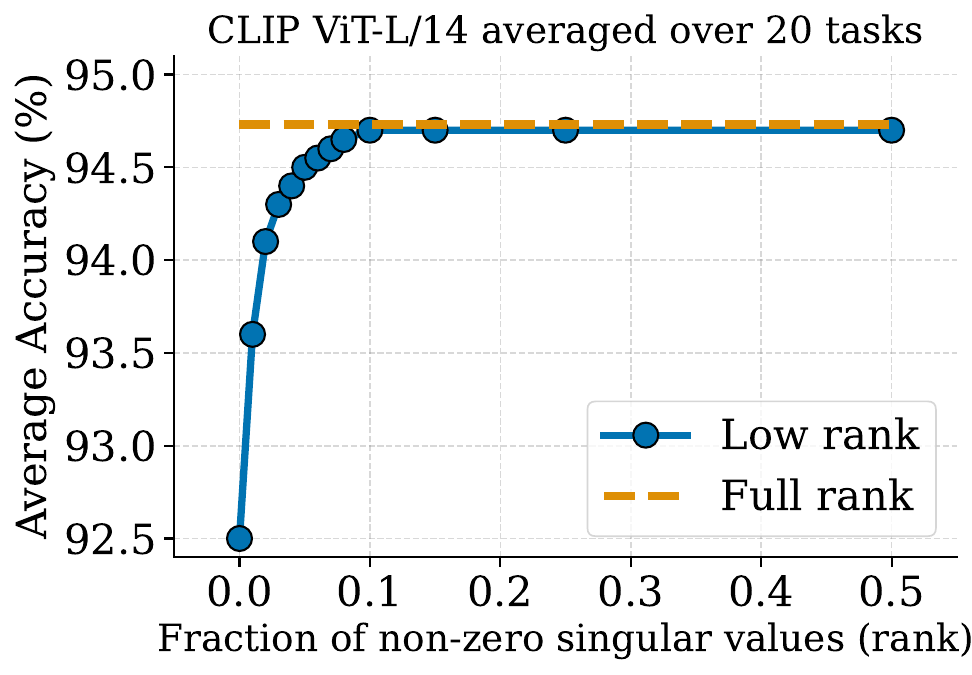}
        \caption{CLIP ViT-L/14}
        \label{fig:TA_ViT_L_14}
    \end{subfigure}
    
    \caption[Combined Figure Caption]{
     Mean absolute accuracy of the  CLIP ViT-\{B/32, B/16, L/14\} models across increasing fractions of retained singular components, averaged over 20 tasks released by~\citep{wang2024localizing}. The \textcolor{Dandelion}{yellow} line represents the average accuracy of the original fine-tuned models with full-rank task matrices, while the \textcolor{MidnightBlue}{blue} line shows the accuracies using low-rank approximations.
     }
    \label{fig:combined_analysis_TA_rank_appendix}
\end{figure}

In~\autoref{fig:SVD_Spectrum_low_energy_appendix}, we demonstrate that increasing task vector rank consistently enhances performance on both ID and OOD benchmarks. Our analysis reveals a clear positive correlation between retaining small singular values, \ie ~\LE, and improved generalization performance across diverse model configurations, spanning from high-performing ID models to those with poor average OOD performance. Notably, even when preserving 95\% of singular values, performance degradation occurs on both ID and OOD tasks, demonstrating that truncation-based approaches fail to enhance generalization and, counterintuitively, that low-energy components contain critical information for robust performance. \looseness=-1

This finding contrasts sharply with our observations on smaller-scale downstream tasks. When replicating truncation experiments across \textit{downstream task arithmetic} operations involving the 20 tasks proposed by~\citep{wang2024localizing}, the task matrices exhibit pronounced low-rank properties, corroborating previous findings that a limited subset of task vectors can accurately represent each layer's functionality~(see~\autoref{fig:combined_analysis_TA_rank_appendix}). Remarkably, retaining only 5\% of singular components for each task yields mean accuracy comparable to the original fine-tuned models. This suggests that 95\% of singular components in each layer matrix can be removed without significant performance degradation on these smaller-scale benchmarks.
\looseness=-1

\section{Qwen fine-tuning}
\label{sec:Qwen}
In this section, we detail the fine-tuning procedure for Qwen. We perform \textit{full-parameter fine-tuning} using the AdamW optimizer~\citep{AdamW} with a batch size of 32. To mitigate train–test context mismatch, we fix the context length at 2,048 tokens. We evaluate three specific model variants: \texttt{M-1}, which utilizes a linear learning rate schedule; \texttt{M-2}, which employs a cosine schedule; and \texttt{M-3}, which uses a cosine schedule but is trained for two additional epochs compared to the others.

\section{Derivation and Interpretability of Coefficients}
\label{app:Interpretability_of_Coefficients}

The coefficient $\lambda_{\text{Low}} = f(\rho, \cos \alpha)$ is derived from a set of logical boundary conditions rather than heuristic selection. We seek a smooth function $f: [0,1]^2 \to [0,1]$ that behaves predictably at the limits of the spectral and geometric signals:

\begin{itemize}
    \item $f(0, 0) = 0$: Suppress low-energy components for sharp, misaligned spectra.
    \item $f(1, c) = 1$ and $f(\rho, 1) = 1$: Retain components if either signal is maximal.
    \item $f(\rho, 0) = \rho$: Fall back to the spectral decay ratio when alignment is absent.
    \item $f(0, \cos \alpha) = \cos \alpha$: Rely on alignment when the spectral mass $\rho$ is negligible.
\end{itemize}

Assuming the interaction between $\rho$ and $\cos\alpha$ is bilinear—representing the simplest smooth interaction—these four constraints \textit{uniquely determine} the mapping:
\begin{equation}
f(\rho, \cos\alpha) = \rho + \cos\alpha - \rho\cos\alpha = \rho + (1 - \rho)\cos\alpha.
\end{equation}

\paragraph{Sensitivity and Stability.} \methodname is a one-shot update that is $1$-Lipschitz in both arguments:
\begin{equation}
\frac{\partial f}{\partial \rho} = 1 - \cos\alpha \in [0, 1], \quad \frac{\partial f}{\partial \cos\alpha} = 1 - \rho \in [0, 1].
\end{equation}

\section{Centered Kernel Alignment analysis}
\label{app:CKA}

\begin{figure*}[t]
    \centering
    \begin{subfigure}[t]{0.48\textwidth}
        \centering
        \includegraphics[width=\linewidth]{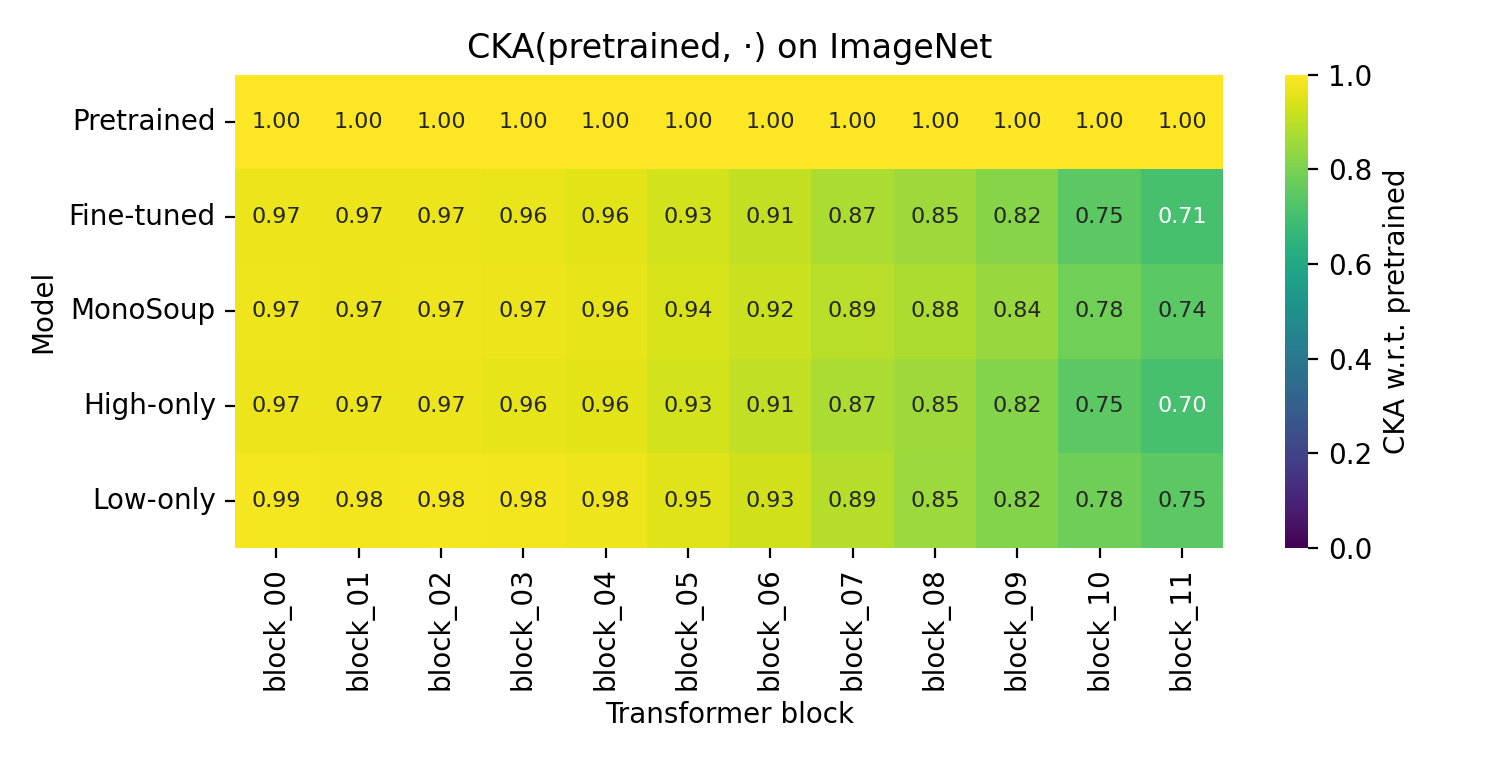}
        \caption{M-14 (worst-OOD), ImageNet (ID)}
        \label{fig:cka_id_14}
    \end{subfigure}\hfill
    \begin{subfigure}[t]{0.48\textwidth}
        \centering
        \includegraphics[width=\linewidth]{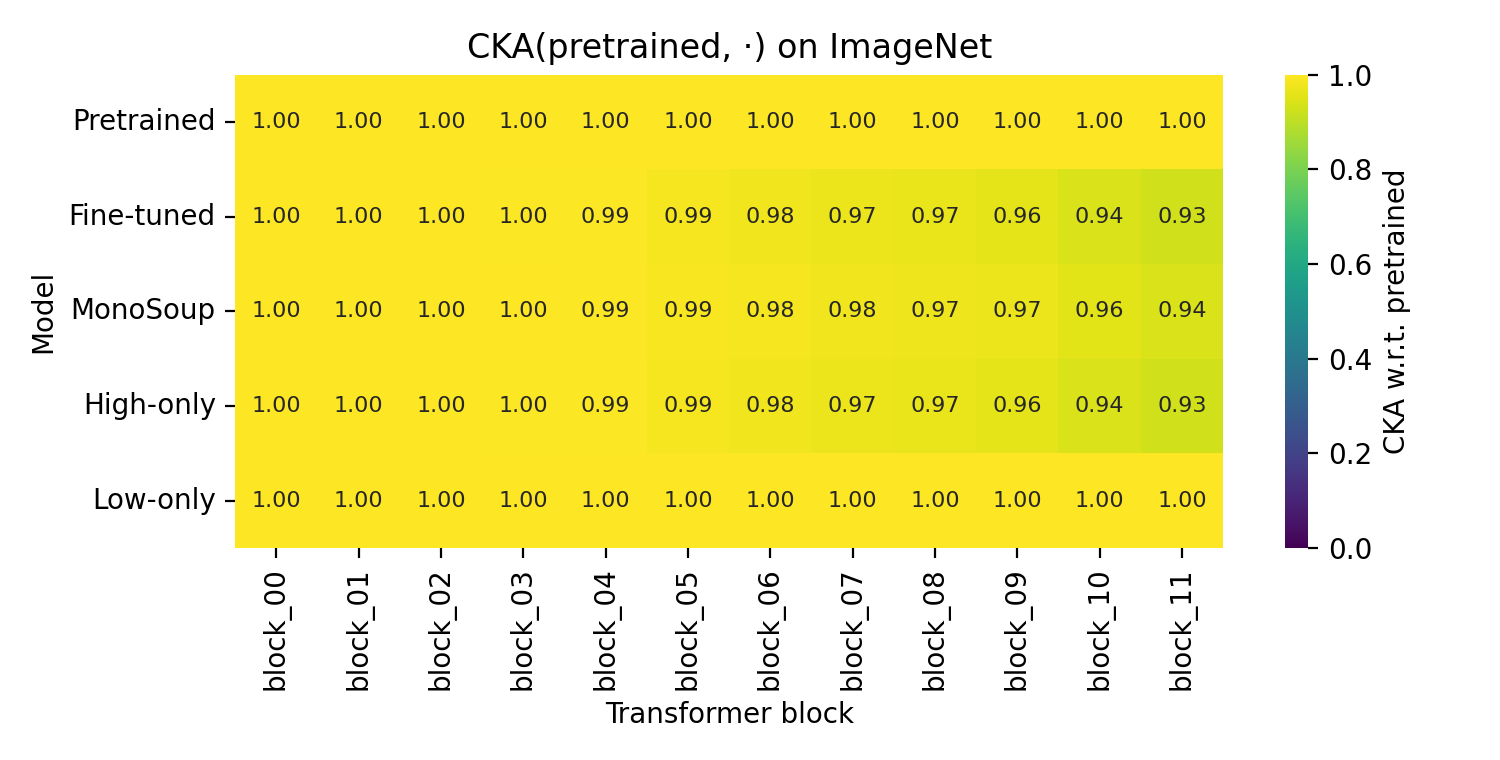}
        \caption{M-61 (best-OOD), ImageNet (ID)}
        \label{fig:cka_id_61}
    \end{subfigure}

    \vspace{0.6em}

    \begin{subfigure}[t]{0.48\textwidth}
        \centering
        \includegraphics[width=\linewidth]{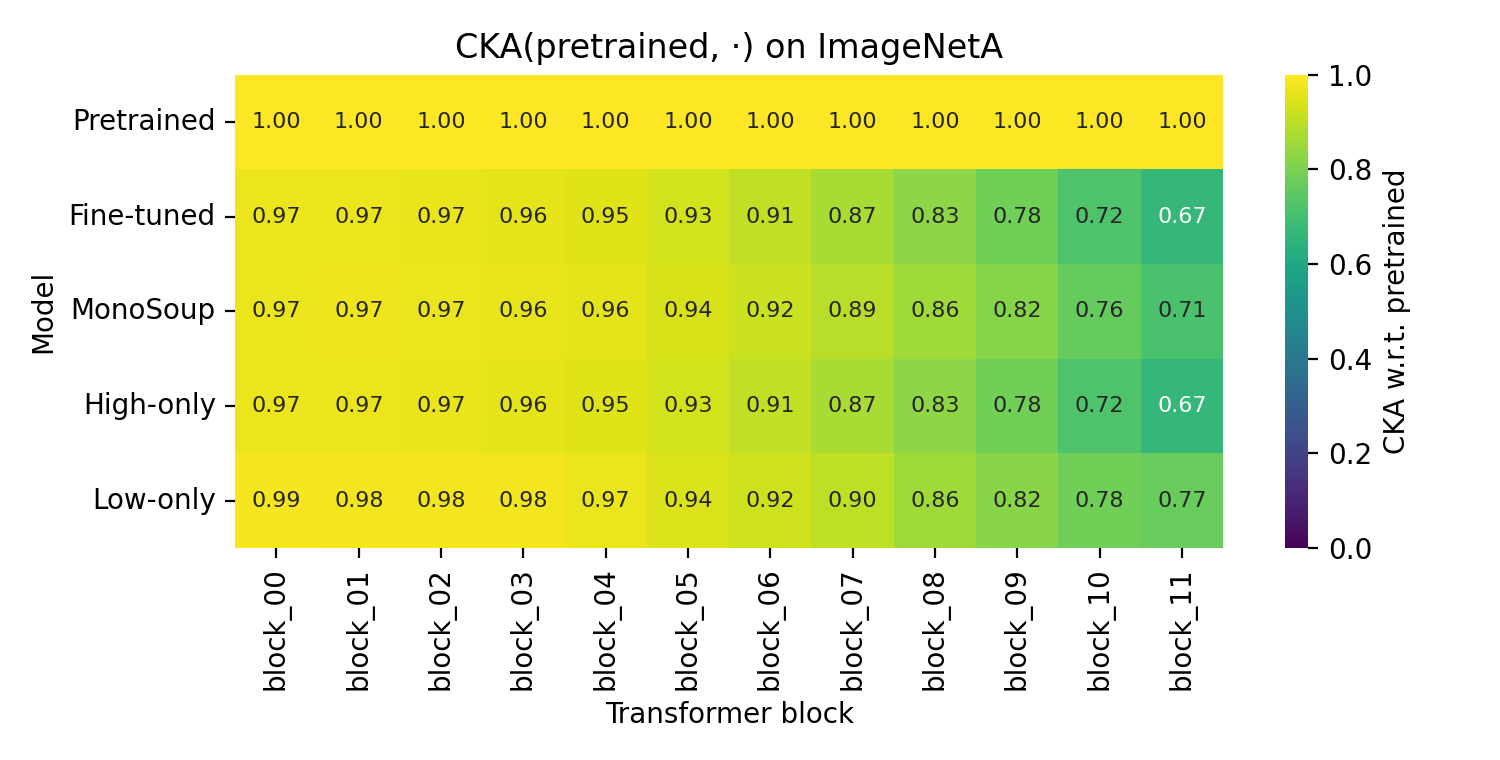}
        \caption{M-14 (worst-OOD), ImageNet-A (OOD)}
        \label{fig:cka_ood_14}
    \end{subfigure}\hfill
    \begin{subfigure}[t]{0.48\textwidth}
        \centering
        \includegraphics[width=\linewidth]{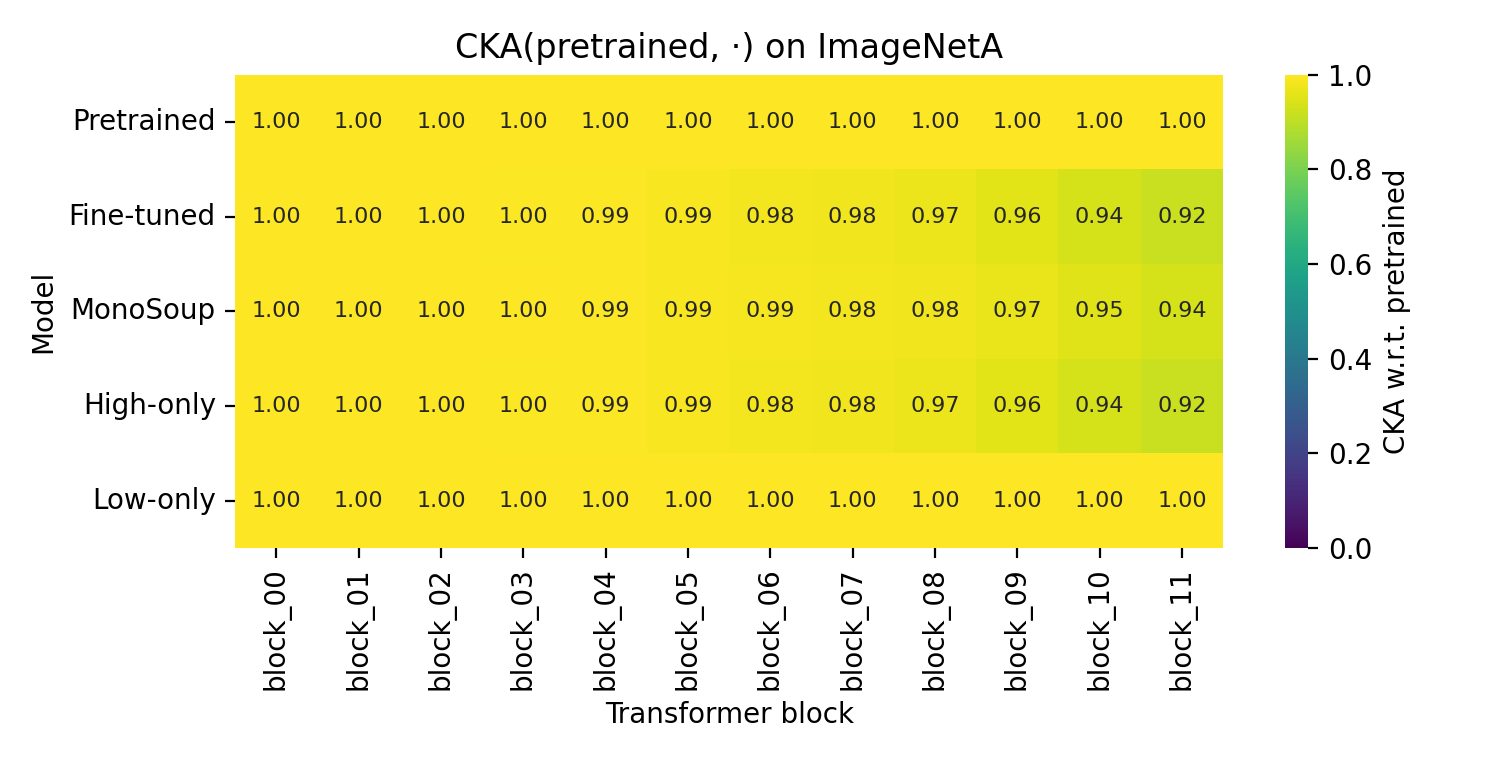}
        \caption{M-61 (best-OOD), ImageNet-A (OOD)}
        \label{fig:cka_ood_61}
    \end{subfigure}

    \caption{Feature-space alignment maps (CKA) between edited
    checkpoints and the pre-trained backbone, arranged by
    \textit{model} (columns: M-14 worst-OOD, M-61 best-OOD) and
    \textit{evaluation distribution} (rows: ImageNet ID, ImageNet-A
    OOD). On the weak checkpoint M-14, fine-tuning distorts deep
    layers most severely under OOD (panel~\subref{fig:cka_ood_14});
    \methodname partially restores the pre-trained representation
    in those layers, mirroring the small disruption already present
    on the strong checkpoint M-61.}
    \label{fig:cka_combined}
\end{figure*}

To investigate the representational dynamics induced by \methodname, we analyze the internal feature spaces of the model components using Centered Kernel Alignment (CKA) \citep{kornblith2019similarity}. While our methodology operates directly on model weights, this analysis serves to verify that spectral reweighting translates to meaningful preservation of pre-trained features on out-of-distribution (OOD) data without collapsing the specialized signals acquired during fine-tuning. By measuring the similarity between the hidden representations of edited checkpoints and the original pre-trained backbone, we provide empirical grounding for the roles of high- and low-energy subspaces in the generalization-specialization trade-off.

For each transformer block $\ell$, we compare the hidden features of:
(a) Pre-trained, (b) fine-tuned, (c) MonoSoup, (d) High-only $\left(\WW_{\text{High }}\right)$, (e) Low-only $\left(\WW_{\text{Low}}\right)$. We compute linear CKA  \citep{kornblith2019similarity} to the pre-trained features on an unlabeled set from ImageNet-1K and OOD set.  

This analysis examines whether \methodname preserves pre-trained representations on OOD data while retaining task-specific specialization on ID data. Concretely, we expect the edited checkpoint's features to remain closer to the pre-trained backbone than the fully fine-tuned model on OOD inputs, without reverting to the pre-trained solution on ID. To this end, we plot the layer-by-layer CKA similarity to the pre-trained features for five model variants, namely, (a) Pre-trained, (b) Fine-tuned, (c) MonoSoup, (d) High-only ($\mathbf{W}_\text{High}$), (e) Low-only ($\mathbf{W}_\text{Low}$), on both ID and OOD data, for the worst- and best-OOD checkpoints in~\autoref{fig:cka_combined}.

Due to the computational cost of calculating the CKA for each transformer block, we conduct this experiment using $25$ batches of size $256$ on ImageNet~(ID) data and the most challenging OOD dataset among ImageNet distribution shifts, which is ImageNet-A~(OOD). This setup ensures that all samples from ImageNet-A are included within the specified number of batches and samples per batch. \looseness=-1

As a validation baseline, we first plot the pretrained row, confirming a constant CKA value of $1.00$. For the `Fine-tuned' model (M-14), early blocks (0–3) maintain high similarity ($\approx 0.97$), while middle blocks (4–6) decline slightly from $0.96$ to $0.93$. However, the deeper blocks (7–11) deviate progressively, falling from $0.87$ to $0.67$. This contrasts sharply with the strong model (best-OOD, right panel), which retains significantly more pretrained structure in its deepest layers; notably, Block 11 maintains a CKA of $0.92$ in the strong model compared to $0.67$ in the fine-tuned version. Finally, MonoSoup bridges this gap: while its early blocks mirror the fine-tuned model, its deeper blocks consistently exhibit higher CKA values (e.g., Block 11 reaches $0.71$ vs. $0.67$), indicating better preservation of pretrained features. \looseness=-1

MonoSoup preserves the integrity of shallow and mid layers while noticeably realigning the deepest layers toward the pretrained ImageNetA representation. This visually confirms that MonoSoup partially de-specializes the model, enhancing OOD robustness while retaining the essential fine-tuned signal. In contrast, the \textit{High-only row closely mirrors the Fine-tuned} model, exhibiting identical values in early blocks ($\approx 0.97$) and a similar decline in deeper blocks (ending near $0.67$). This validates the core insight of our decomposition: high-energy task directions ($\Delta\WW_\text{high}$) dominate the update, such that $\WW_{0} + \Delta\WW_\text{high} \approx \WW_{1}$. These directions capture the primary representational drift from pretraining—drift that empirically correlates with OOD degradation in the weak model (M-14: worst-OOD). Ultimately, this illustrates the \textit{truncate hurts OOD} phenomenon in feature space: retaining only $\Delta\WW_\text{high}$ reproduces the specific representational shifts that compromise robustness. \looseness=-1

The Low-only model remains closest to the pretrained representation, showing better OOD performance but poorer in-distribution (ID) results. In contrast, the High-only model aligns with the Fine-tuned model and inherits its OOD vulnerabilities. MonoSoup smoothly interpolates between these extremes, balancing the trade-off. This aligns with our hypothesis: Low-only retains old knowledge at the cost of ID specialization, exhibiting the highest CKA to pretrained but worst ID performance; High-only represents pure task directions, diverging most from pretrained in deep layers and suffering worst OOD; MonoSoup combines these, achieving closer alignment to pretrained than Fine-tuned or High-only on OOD without reverting fully to pretrained features.

On the worst-OOD model like M-14, fine-tuning significantly distorts deep features away from the pretrained representation on OOD data (ImageNetA), with high-energy task directions embodying this distortion. MonoSoup mitigates this by reweighting high- and low-energy updates, notably pulling the deepest layers closer to the pretrained representation (CKA increase of $0.04\text{--}0.05$), which corresponds to the observed OOD performance gains. Conversely, on stronger backbones where fine-tuning already maintains a high similarity to pretrained features (CKA $\geq 0.9$), MonoSoup’s adjustments and accuracy improvements are naturally smaller. This exemplifies the principle that MonoSoup is most effective when fine-tuning begins to disrupt the pretrained representation, as demonstrated by two models and their corresponding CKA heatmaps.

Recall that $\WW_{\mathrm{Low}}^{(\ell)}$ is the component whose magnitude is modulated by $\cos \alpha^{(\ell)}$ through $\lambda_{\mathrm{low}}^{(\ell)}$ in~\autoref{eq:losm-lambdas}. The fact that the Low-only model remains closest to the pre-trained representation~(highest CKA) and improves OOD at the cost of ID, while High-only mirrors the fine-tuned representation and inherits its OOD vulnerabilities, empirically validates our interpretation of $\cos \alpha^{(\ell)}$ as a \textit{pre-training preservation signal}: increasing $\lambda_{\mathrm{low}}^{(\ell)}$ (hence $\cos \alpha^{(\ell)}$ ) shifts the representation toward the pre-trained solution on OOD inputs in exactly the way predicted by our CKA analysis.

\section{ConvNeXt}
\label{app:convnext}

\begin{wraptable}{r}{0.48\linewidth}
\centering
\vspace{-1.2\baselineskip}
\caption{Performance of \methodname on ConvNeXt-Small
(IN-22k\,$\to$\,IN-1k).}
\label{tab:Mambaout_ConvNeXt}
\footnotesize
\setlength{\tabcolsep}{10pt}
\renewcommand{\arraystretch}{1.08}
\vspace{-0.2cm}
\begin{tabular}{lcc}
\toprule
Method & ID (ImageNet) & Avg.\ OOD \\
\midrule
Pretrained (IN-22k)      & 82.17\% & 52.88\% \\
Fine-tuned (IN-1k)       & 85.17\% & 55.85\% \\
\methodname (auto)       & 85.19\% & 56.07\% \\
\methodname ($R{=}0.8$)  & \textbf{85.57\%} & \textbf{56.70\%} \\
\bottomrule
\end{tabular}
\vspace{-1.2\baselineskip}
\end{wraptable}
To verify that \methodname generalises beyond Transformer-based architectures, we evaluate on ConvNeXt-Small~\citep{liu2022convnet2020s} pretrained on ImageNet-22k and fine-tuned on ImageNet-1k. \autoref{tab:Mambaout_ConvNeXt} reports both the automated (entropy-based effective rank) and fixed-threshold ($R{=}0.8$) variants.  Both improve over the fine-tuned baseline on ID and OOD accuracy, with the fixed-threshold variant achieving the largest gains ($+0.40$\,pp~ID, $+0.85$\,pp~OOD).  These results mirror the CLIP findings and confirm that the spectral reweighting principle underlying \methodname is not architecture-specific.\looseness=-1

\section{Variance Threshold \texorpdfstring{$R$}{R} Ablation}
\label{app:variance_threshold}

The entropy-based effective rank removes the need for a manual threshold (\autoref{sec:method}); here we study the fixed-$R$ variant to understand its sensitivity.  We sweep $R \in \{0.1, 0.2, \ldots, 0.9\}$ on three checkpoints: two CLIP ViT-B/32 models fine-tuned from zero-shot initialization (\autoref{fig:zero-shot-variance-threshold-ablation}) and a ConvNeXt-Small model pretrained on ImageNet-22k and fine-tuned on ImageNet-1k (\autoref{fig:convnext-variance-threshold}).\footnote{Checkpoint: \url{https://huggingface.co/timm/convnext_small.in12k_ft_in1k}}

\begin{figure}[!htbp]
\centering
\begin{subfigure}[t]{0.48\textwidth}
  \centering
  \includegraphics[width=\linewidth]{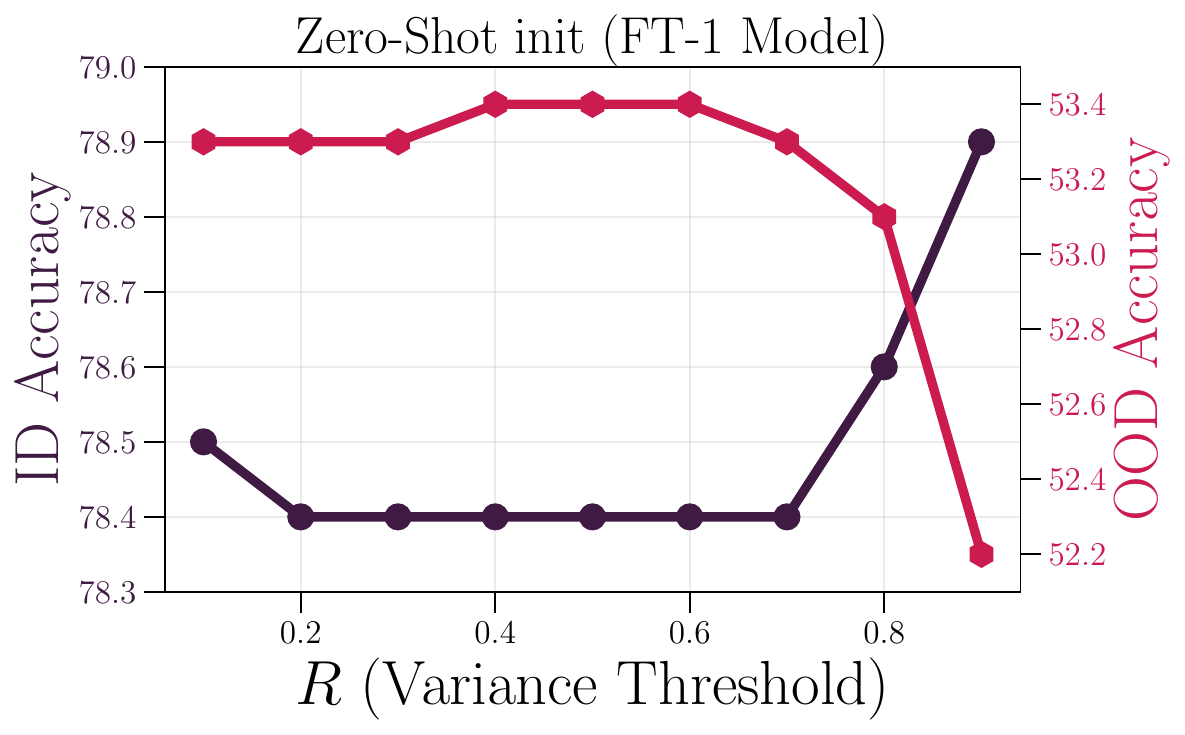}
  \caption{Zero-shot init.\ M-1}
  \label{fig:zero-shot-initialization-m1}
\end{subfigure}\hfill
\begin{subfigure}[t]{0.48\textwidth}
  \centering
  \includegraphics[width=\linewidth]{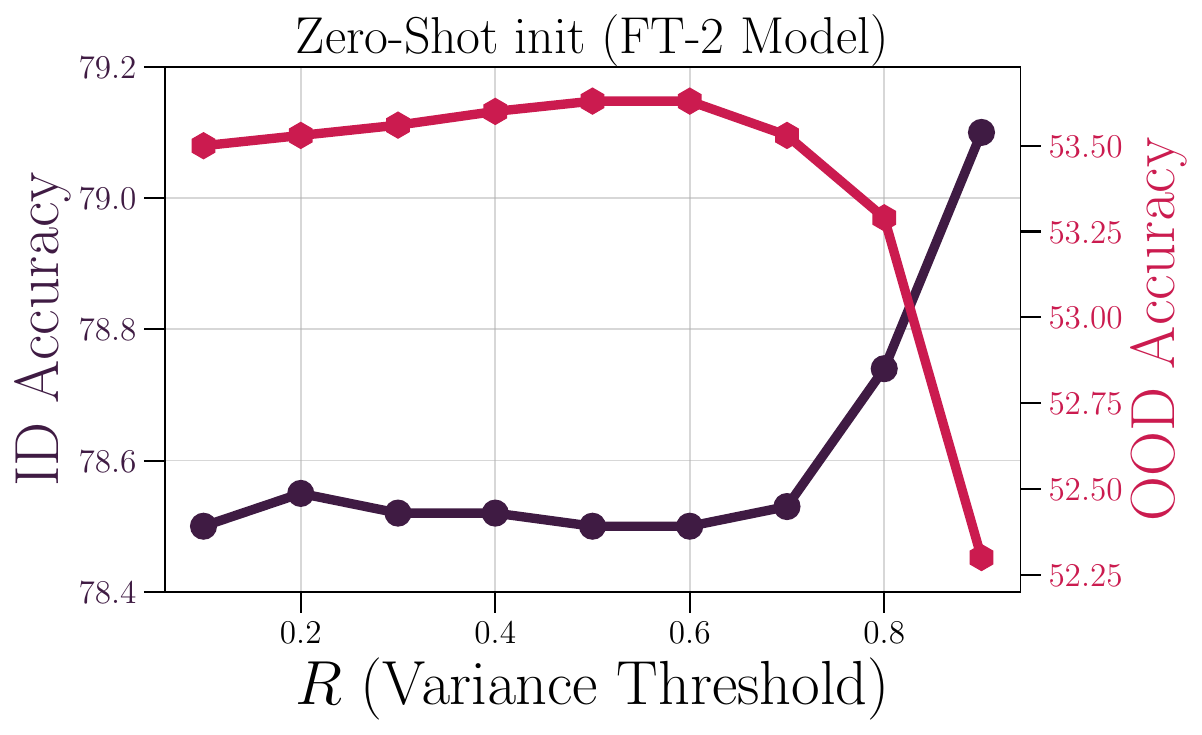}
  \caption{Zero-shot init.\ M-2}
  \label{fig:zero-shot-initialization-m2}
\end{subfigure}
\caption{ID and OOD accuracy of \methodname on two CLIP ViT-B/32 checkpoints (zero-shot init.)\ as a function of the variance threshold~$R$.  Mid-range values ($R\!\approx\!0.5$--$0.7$) favour OOD; large values ($R\!\ge\!0.85$) favour ID.}
\label{fig:zero-shot-variance-threshold-ablation}
\end{figure}

\begin{figure}[t]
\centering
\includegraphics[width=\linewidth]{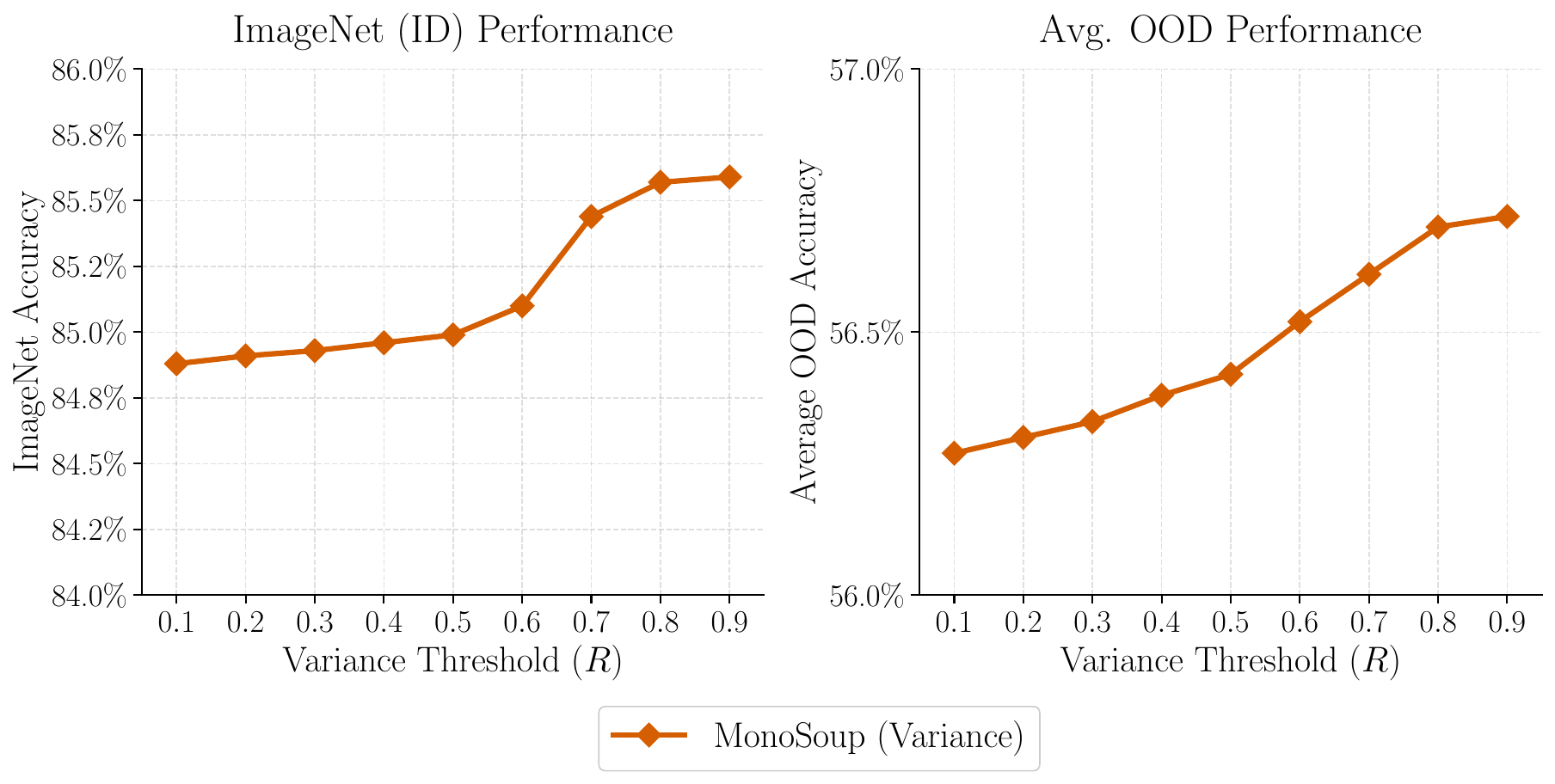}
\caption{Variance threshold ablation on ConvNeXt-Small.  Unlike CLIP, both ID and OOD improve monotonically with~$R$.}
\label{fig:convnext-variance-threshold}
\end{figure}

\paragraph{CLIP (zero-shot init.).}
Both checkpoints exhibit the same pattern (\autoref{fig:zero-shot-variance-threshold-ablation}): OOD accuracy peaks at moderate thresholds ($R\!\approx\!0.4$--$0.6$, reaching ${\sim}53.3\%$) and declines toward $R\!=\!0.9$, while ID accuracy remains flat for $R\!\le\!0.7$ and rises only at high thresholds (${\sim}79\%$ at $R\!=\!0.9$).  Interpreting $R$ as the fraction of spectral energy assigned to the high-energy subspace, moderate truncation retains low-energy residuals that benefit OOD, whereas large $R$ steers the model toward the fine-tuned solution at the expense of OOD robustness.  The consistency across two independently fine-tuned checkpoints confirms that this trade-off is a property of the spectral structure, not an artifact of a particular training run.\looseness=-1

\paragraph{ConvNeXt.}
The sensitivity is gentler and largely monotonic (\autoref{fig:convnext-variance-threshold}): both ID and OOD accuracy increase steadily with $R$, with a modest knee near $R\!\approx\!0.7$--$0.8$.  This suggests that the ConvNeXt fine-tuning update is spectrally more concentrated than that of CLIP, so expanding the retained subspace recovers useful signal without discarding low-energy components that disproportionately help OOD.

\paragraph{Takeaway.}
Across all three models, $R \in [0.7, 0.85]$ delivers the strongest joint ID--OOD balance.  Nevertheless, the automated entropy-based variant matches or exceeds this range without manual selection (\autoref{tab:Mambaout_ConvNeXt}), confirming that $R$ need not be tuned in practice.\looseness=-1

\section{Anisotropic Scaling}
\begin{figure*}[t!]
    \centering
    \begin{subfigure}[b]{0.45\textwidth}
        \centering
        \includegraphics[width=\linewidth]{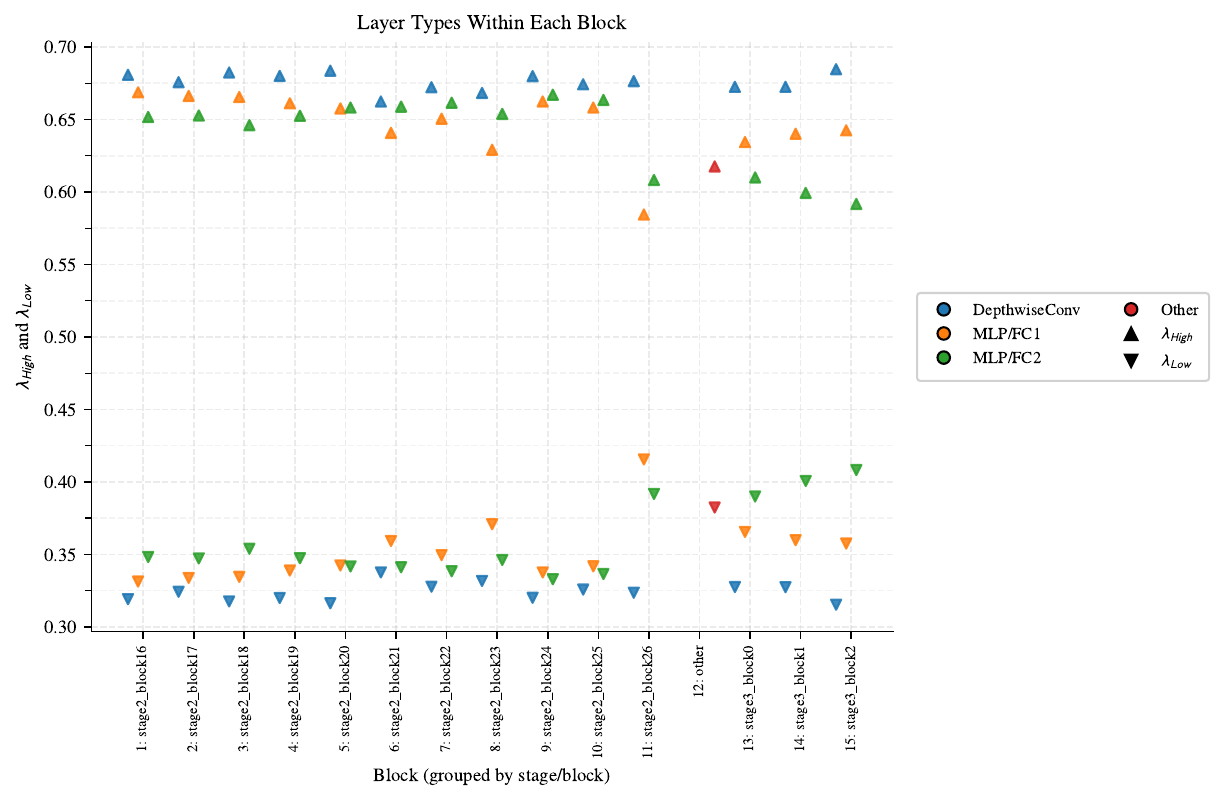}
        \caption{Grouped by stage/block}
        \label{fig:lambda_perblock}
    \end{subfigure}
    \begin{subfigure}[b]{0.45\textwidth}
        \centering
        \includegraphics[width=\linewidth]{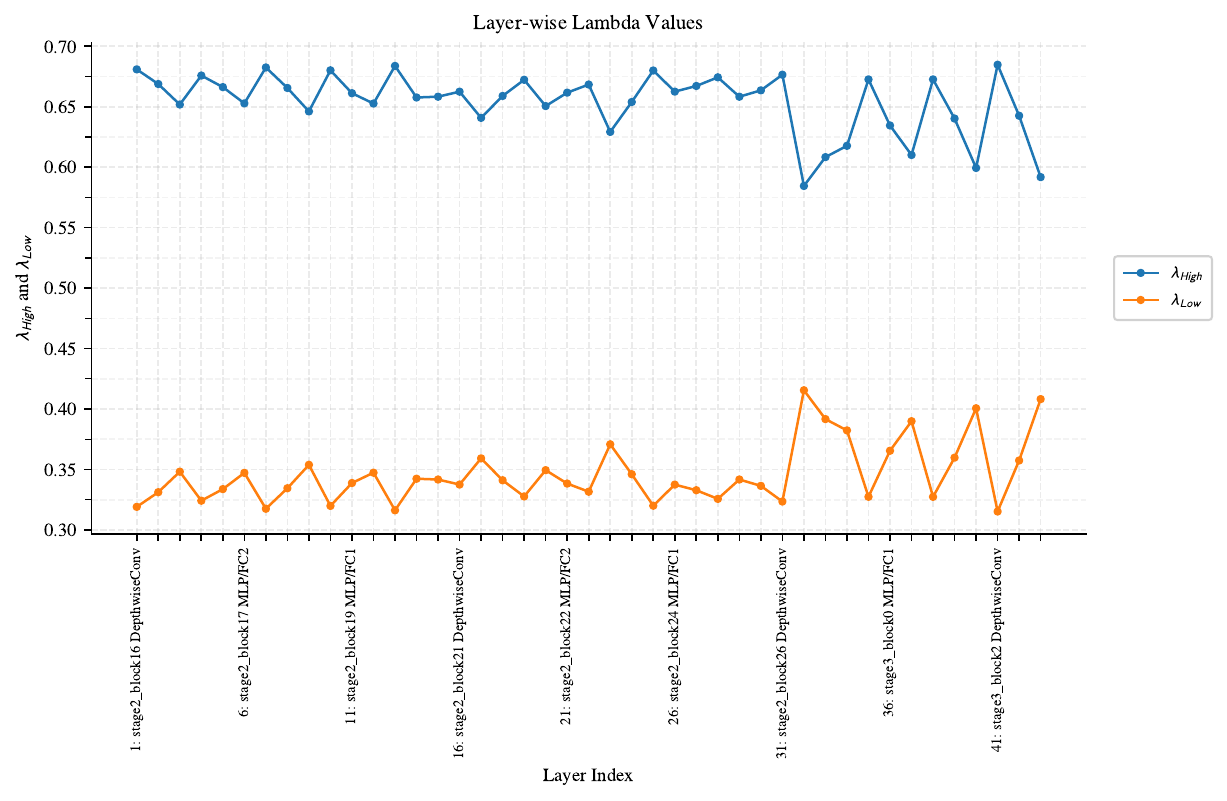}
        \caption{Grouped by layer index}
        \label{fig:lambda_perlayer}
    \end{subfigure}
    \caption{Evidence of Anisotropic Scaling. Distribution of the computed mixing coefficients $\lambda_\text{{High}}^{(\ell)}$ and $\lambda_\text{{Low}}^{(\ell)}$ across different layers of the ConvNeXt model.}
    \label{fig:lambda_anisotropic}
\end{figure*}
\label{sec:Anisotropic_Scaling}
\methodname applies genuinely anisotropic scaling across layers, rather than collapsing to the scalar interpolation of methods such as Wise-FT~\cite{wortsman2022robust}, which enforce a uniform scaling factor~($\lambda_\text{High}^{(\ell)} = \lambda_\text{Low}^{(\ell)} = \alpha$) across all directions. As shown in~\autoref{fig:lambda_anisotropic}, the mixing coefficients $\lambda_\text{High}^{(\ell)}$ and $\lambda_\text{Low}^{(\ell)}$ exhibit a distinct and significant separation across layers of the ConvNeXt model, with a mean gap of approximately $0.35$ and $\lambda_\text{High}^{(\ell)}$ consistently dominating $\lambda_\text{Low}^{(\ell)}$. This allows \methodname to preserve the principal task-specific subspace while selectively damping the low-energy orthogonal complement. Furthermore, the magnitude of this separation is \textit{not static}: as seen in the deeper layers~(Stage 3) of~\autoref{fig:lambda_anisotropic}, the gap fluctuates significantly more than in earlier stages~(Stage 2), demonstrating that \methodname actively adapts to the heterogeneous spectral properties of each layer --- a layer-specific geometric transformation that is fundamentally not achievable via scalar interpolation or fixed manual hyperparameters.

\section{Out-of-Distribution Evaluation}
\label{sec:ood}
\begin{table}[t]
\centering
\caption{Out-of-distribution evaluation along two axes: cross-lingual shift (top; MGSM, 8-shot native CoT) and cross-domain shift (bottom; 0-shot). $\Delta_{\text{FT}\to\text{Mono}}$ denotes the gain of \methodname over the fine-tuned checkpoint. \textbf{Bold} marks the best score per row.}
\label{tab:ood}
\footnotesize
\setlength{\tabcolsep}{3.5pt}
\renewcommand{\arraystretch}{1.05}
\begin{tabular}{lcccc}
\toprule
 & Pretrained & Fine-tuned & \methodname & $\Delta_{\text{FT}\to\text{Mono}}$ \\
\midrule
\rowcolor{lightgray}
\multicolumn{5}{l}{\textit{Cross-lingual OOD\,---\,MGSM (8-shot CoT)}} \\
\cmidrule(lr){1-5}
EN  & 45.2 & 40.4 & \textbf{47.6} & $+7.2$ \\
ES  & 35.2 & 33.6 & \textbf{37.1} & $+3.5$ \\
FR  & 35.6 & 33.6 & \textbf{37.5} & $+3.9$ \\
DE  & 28.8 & 24.8 & \textbf{30.2} & $+5.4$ \\
ZH  & 40.0 & 39.2 & 38.8          & $-0.4$ \\
JA  & 19.2 & 20.8 & \textbf{21.1} & $+0.3$ \\
RU  & 33.6 & 31.6 & \textbf{34.8} & $+3.2$ \\
TH  & 22.8 & 20.4 & \textbf{23.4} & $+3.0$ \\
\cmidrule(lr){1-5}
\textbf{Avg.} & 32.6 & 30.6 & \textbf{33.8} & $+3.3$ \\
\midrule
\rowcolor{lightgray}
\multicolumn{5}{l}{\textit{Cross-domain OOD\,---\,general (0-shot)}} \\
\cmidrule(lr){1-5}
GSM8K\,(ID)      & 42.08 & 44.81 & \textbf{48.87} & $+4.06$ \\
ARC-Challenge    & 31.48 & 32.94 & \textbf{33.11} & $+0.17$ \\
HellaSwag        & 37.43 & 37.92 & 37.85          & $-0.07$ \\
MMLU Humanities  & 36.66 & 39.77 & \textbf{41.91} & $+2.14$ \\
MMLU Social Sci. & 47.55 & 52.52 & \textbf{55.36} & $+2.84$ \\
\bottomrule
\end{tabular}
\end{table}
The \textsc{ID}/\textsc{OOD} split used in
\autoref{sec:LLMs}
(\textsc{GSM8K}\,$\to$\,GSM$_{\text{Plus}}$/GSM8K$_{\text{Platinum}}$) varies task \textit{difficulty} rather than the underlying distribution.  To test whether the gains of \methodname extend to genuine distributional shifts, we evaluate on two complementary axes: a \textit{cross-lingual} shift, where the input language differs from the fine-tuning corpus, and a \textit{cross-domain} shift, where the target capability is absent from the fine-tuning data entirely.\looseness=-1

\myparagraph{Setup.}%
~Starting from the Qwen3-0.6B base model~\cite{Qwen3} and a variant fine-tuned exclusively on English mathematical data (MetaMathQA~\cite{MetaMath})\footnote{Fine-tuned checkpoint: \url{https://huggingface.co/suayptalha/Qwen3-0.6B-Math-Expert}}, we apply \methodname post-hoc and evaluate all three checkpoints, namely the pretrained, fine-tuned, and \methodname models, under identical conditions with \texttt{lm-evaluation-harness}\,(v0.4.11)~\cite{eval-harness}\footnote{\url{https://github.com/EleutherAI/lm-evaluation-harness}}. For the cross-lingual benchmarks we use the \texttt{mgsm\_native\_cot\_*} tasks with 8-shot native chain-of-thought prompting and the generation parameters recommended by the Qwen3 technical report~\cite{Qwen3} (temperature\,$=0.7$, top\text{-}$p=0.8$, top\text{-}$k=20$, seed\,$=42$).  Under this protocol our pretrained baseline achieves $32.6\%$ average MGSM accuracy, matching the $33.1\%$ reported by~\citet{Qwen3} within standard error.  Cross-domain benchmarks use their standard 0-shot harness configurations~\cite{arc,hellaswag,mmlu}.\looseness=-1

\myparagraph{Cross-lingual OOD.}
Because the fine-tuned model has seen only English, the non-English subsets of MGSM~\cite{mgsm} isolate language as the sole axis of distributional shift while holding problem difficulty constant. As \autoref{tab:ood} shows, fine-tuning degrades performance in $6/8$ languages (avg.\ $-2.0$\,pp), confirming that task-specific adaptation erodes the multilingual representations acquired during pre-training.  \methodname reverses this collapse: it improves $7/8$ languages over the fine-tuned checkpoint ($+3.3$\,pp on average) and surpasses even the pretrained baseline by $+1.2$\,pp, indicating that the spectral reweighting not only restores but \textit{refines} the retained multilingual signal.\looseness=-1

\myparagraph{Cross-domain OOD.}
We further test on benchmarks whose target capabilities, namely scientific reasoning (\textsc{ARC-Challenge}~\cite{arc}), commonsense inference (\textsc{HellaSwag}~\cite{hellaswag}), and humanities/social-science knowledge (\textsc{MMLU}~\cite{mmlu}), are entirely absent from the mathematics-focused training data. \autoref{tab:ood} shows that \methodname preserves or improves every cross-domain benchmark, gaining $+2$--$3$\,pp on the MMLU subsets while simultaneously boosting the in-distribution GSM8K score by $+4.06$\,pp.  This confirms that \methodname does not trade cross-domain breadth for task-specific depth.
\looseness=-1

\myparagraph{Discussion.}
The cross-lingual and cross-domain results together demonstrate that the benefits of \methodname are not confined to difficulty perturbations of the training distribution but extend to bona fide distributional shifts along both linguistic and capability axes.
\looseness=-1

\myparagraph{Computational Cost.}
\methodname is a \textit{one-shot, data-free} spectral operation applied post hoc to a single fine-tuned checkpoint: it requires neither gradient computation nor access to training data.  We profile its wall-clock time, peak GPU memory, and storage footprint across four model scales spanning more than two orders of magnitude in parameter count, and contrast against the multi-checkpoint cost of Model Soups~\citep{wortsman2022model}. All profiling is performed on a single NVIDIA A100 GPU (80\,GB). For each model we load the pretrained and fine-tuned state dicts, compute the layer-wise weight difference $\Delta \WW^{(\ell)}$, and apply the full \methodname pipeline (SVD, entropy-based effective rank, coefficient computation, reweighting) sequentially across all 2-D weight matrices; we report the median wall-clock time over three runs. For the CLIP models we use the ViT-B/32 and ViT-L/14 checkpoints released by~\citet{wortsman2022model}, fine-tuned on ImageNet with linear-probing initialization. For Qwen3-0.6B we pair the base model~\cite{Qwen3} with a variant fine-tuned on MetaMathQA~\cite{MetaMath}.\footnote{\url{https://huggingface.co/suayptalha/Qwen3-0.6B-Math-Expert}} For Qwen3-14B we pair the base model with Ophiuchi-Qwen3-14B-Instruct,\footnote{\url{https://huggingface.co/prithivMLmods/Ophiuchi-Qwen3-14B-Instruct}} an instruction-tuned variant trained on mathematical reasoning and code generation data, ensuring that the SVD operates on realistic fine-tuning deltas.
\looseness=-1

\begin{table}[t]
\centering
\caption{Computational cost of \methodname versus Model Soups across four model scales.  \methodname is a single-checkpoint, data-free spectral decomposition; the Model Soup column estimates the wall-clock cost of independently fine-tuning $m{=}10$ checkpoints with different hyperparameter configurations, as required by~\citet{wortsman2022model}.  Peak memory reflects the largest single layer, since the decomposition is applied layer-wise. {\footnotesize $^{\dagger}$ Estimate of the total wall-clock time to independently fine-tune ten checkpoints under different hyperparameter configurations, as required by Model Soups.  Exact cost depends on dataset size, hardware, and training recipe; the order-of-magnitude gap is robust to all three.}}
\label{tab:cost}
\small
\setlength{\tabcolsep}{15pt}
\renewcommand{\arraystretch}{1.1}
\vspace{-0.2cm}
\resizebox{\textwidth}{!}{%
\begin{tabular}{lccccc}
\toprule
Model & Params & \methodname & Peak Mem & Model Soup$^{\dagger}$ & Speedup \\
\midrule
CLIP ViT-B/32 &  88M & \phantom{0}3.1\,s & \phantom{00,}685\,MB & $\sim\!150$\,min      & $\sim\!2{,}900\times$  \\
CLIP ViT-L/14 & 304M & \phantom{0}2.2\,s & 1{,}024\,MB          & $\sim\!900$\,min      & $\sim\!25{,}000\times$ \\
Qwen3-0.6B    & 600M & 18.8\,s           & 4{,}170\,MB          & $\sim\!1{,}200$\,min  & $\sim\!3{,}800\times$  \\
Qwen3-14B     &  14B & 83.3\,s           & 20{,}884\,MB         & $\sim\!14{,}400$\,min & $\sim\!10{,}400\times$ \\
\bottomrule
\end{tabular}
}
\vspace{0.4em}

\end{table}

On the primary CLIP setting \methodname completes in $2\!-\!3$ seconds; even on Qwen3-14B it takes only $\sim\!83$ seconds, versus the $\sim\!240$ GPU-hours required to train the ten soup checkpoints.  Because the decomposition is performed layer-wise, peak memory scales with the largest individual weight matrix rather than the full model, keeping the procedure tractable on a single commodity GPU.  Storage is likewise reduced by an order of magnitude (e.g.\ $56$\,GB versus $560$\,GB for Qwen3-14B, since only one checkpoint must be retained).
\looseness=-1

\section{Compatibility with Parameter-Efficient Fine-Tuning}
\label{app:peft}

\begin{table}[t]
\centering
\caption{Effect of \methodname applied to a LoRA-fine-tuned checkpoint (Qwen3-0.6B, $r{=}8$, RSLoRA, MetaMathQA). \textbf{Bold} indicates LoRA\,+\,\methodname attains the best score for that row.}
\label{tab:lora}
\small
\setlength{\tabcolsep}{8.5pt}
\renewcommand{\arraystretch}{1.05}
\begin{tabular}{lccc}
\toprule
Benchmark & LoRA FT & LoRA\,+\,\methodname & $\Delta$ \\
\midrule
GSM8K    & 31.92 & \textbf{32.60} & $+0.68$ \\
MGSM\,EN & 32.40 & \textbf{34.80} & $+2.40$ \\
MGSM\,DE & 23.60 & \textbf{24.80} & $+1.20$ \\
MGSM\,ZH & 30.80 & \textbf{33.60} & $+2.80$ \\
MGSM\,JA & 18.40 & \textbf{19.20} & $+0.80$ \\
\bottomrule
\end{tabular}
\end{table}

\myparagraph{Positioning relative to spectral PEFT methods.} 
A line of parameter-efficient fine-tuning (PEFT) work also examines the spectral structure of weight updates, and we clarify its relationship to \methodname before presenting our experiment.  LoRA-Pro~\citep{wang2024lorapro} operates at \textit{training time}: it adjusts the gradients of the low-rank factors $\AA, \BB$ so that the resulting low-rank update more closely approximates the gradient of full fine-tuning, with approximation fidelity as its objective.  Related methods reason about spectra differently: PiSSA~\citep{meng2024pissa} initialises adapters from the principal singular components of the pretrained weight $\WW$ (not of the task vector $\Delta\WW$) and freezes the low-energy residual.  \methodname differs along three axes: it acts \textit{post hoc} on an already fine-tuned checkpoint rather than during training; its objective is to recover the ID--OOD balance of multi-checkpoint ensembling rather than to approximate a full-rank gradient; and its mechanism is a reweighting of the singular-value spectrum of the accumulated update $\Delta\WW = \WW_{\mathrm{FT}} - \WW_0$, rather than a modification of the learning dynamics.  These prior methods analyse low-energy directions in the task-arithmetic or low-rank-adapter regimes; our contribution is the finding that, under large-scale full-rank fine-tuning with natural distribution shifts, the low-energy tail of $\Delta\WW$ carries information essential for OOD robustness (\autoref{fig:combined_analysis_TA_rank}), together with an automated, data-free scheme that exploits it.  Crucially, this makes the two approaches \textit{complementary} rather than competing, as the experiment below confirms.\looseness=-1

\myparagraph{Applying \methodname to a LoRA checkpoint.} To assess whether \methodname extends beyond full-rank updates, we fine-tune Qwen3-0.6B~\cite{Qwen3} with LoRA~\cite{LoRA} on 100k samples from MetaMathQA~\cite{MetaMath} for one epoch using rank $r{=}8$, RSLoRA scaling~\cite{RSLoRA}, $\alpha{=}16$, and no dropout, targeting all attention and MLP projections (\texttt{q/k/v/o\_proj}, \texttt{gate/up/down\_proj}).  Training uses AdamW with a cosine schedule (learning rate $2{\times}10^{-4}$, effective batch size~32) on a single A100 GPU.  After training we merge the adapter into the base weights to obtain a full-rank checkpoint, then apply \methodname to the merged model.  Both the LoRA-merged and LoRA\,+\,\methodname checkpoints are evaluated under the same protocol as \autoref{sec:ood}: 8-shot native CoT for MGSM and the standard harness configuration for GSM8K.
\looseness=-1

\autoref{tab:lora} shows that \methodname improves $5/6$ benchmarks, but the absolute gains are attenuated relative to the full-rank regime (MGSM average $+1.8$\,pp here vs.\ $+3.3$\,pp in~\autoref{tab:ood}).  This attenuation is predicted by the structure of the update.  A LoRA update factorises as $\Delta \WW = \BB \AA^{\top}$ with $\AA, \BB \in \mathbb{R}^{d \times r}$, so $\mathrm{rank}(\Delta \WW) \le r$ and the singular spectrum of $\Delta \WW$ satisfies $\sigma_i(\Delta \WW) = 0$ for all $i > r$.  Since \methodname acts by attenuating the trailing singular components of $\Delta \WW$, and that tail is identically zero beyond rank~$r$, the operator's intervention surface is at most $r$-dimensional.  \methodname is therefore most effective when applied to full-rank updates, where the spectral tail carries informative mass; in the strictly rank-bounded LoRA regime it still yields consistent but quantitatively smaller improvements, as \autoref{tab:lora} confirms.  This also explains why \methodname and training-time spectral methods such as LoRA-Pro are complementary: a spectrum-aware PEFT method and the post-hoc \methodname step act at different stages of the pipeline and on different objects, so one may apply \methodname to a LoRA-Pro-trained checkpoint after merging, with the post-hoc gain governed by whatever informative mass remains in the low-energy tail.
\looseness=-1

\begin{table*}[t!]
\centering
\caption{\textbf{Six pairs with extreme negative outliers.} For each pair we report the individual ID and OOD accuracies of
the two constituent checkpoints, alongside the ID and OOD accuracy of
their Model Stock combination.}
\label{tab:model_pair_stock_performance_Six_pairs}
\small
\setlength{\tabcolsep}{6pt}
\renewcommand{\arraystretch}{1.15}
\begin{tabular}{ll cc cc cc}
\toprule
\multicolumn{2}{c}{\textbf{Model Pair}}
& \multicolumn{2}{c}{\textbf{ID Acc.\ (\%)}}
& \multicolumn{2}{c}{\textbf{OOD Acc.\ (\%)}}
& \multicolumn{2}{c}{\textbf{Model Stock}} \\
\cmidrule(lr){1-2} \cmidrule(lr){3-4} \cmidrule(lr){5-6} \cmidrule(lr){7-8}
Model $i$ & Model $j$
& $\mathrm{Acc}_{\mathrm{ID}}^{i}$ & $\mathrm{Acc}_{\mathrm{ID}}^{j}$
& $\mathrm{Acc}_{\mathrm{OOD}}^{i}$ & $\mathrm{Acc}_{\mathrm{OOD}}^{j}$
& ID & OOD \\
\midrule
\texttt{Model-2}  & \texttt{Model-14} & 76.2 & 76.5 & 37.1 & 36.7 & 13.2 & \phantom{0}6.0 \\
\texttt{Model-1}  & \texttt{Model-10} & 77.2 & 78.4 & 47.4 & 48.4 & 72.5 & 42.5 \\
\texttt{Model-11} & \texttt{Model-12} & 77.3 & 77.3 & 44.2 & 47.2 & 75.5 & 46.8 \\
\texttt{Model-42} & \texttt{Model-43} & 78.7 & 80.3 & 41.5 & 47.9 & 30.4 & 12.7 \\
\texttt{Model-54} & \texttt{Model-55} & 79.2 & 76.4 & 47.5 & 43.5 & 46.7 & 22.4 \\
\bottomrule
\end{tabular}
\end{table*}

\end{document}